\documentclass{article} 
\usepackage{iclr2023_conference,times}


\usepackage{hyperref}
\usepackage{url}
\usepackage{amsmath,amsfonts,bm}
\usepackage{amssymb}
\usepackage{mathtools}
\usepackage{amsthm}

\usepackage{ulem}

\usepackage[capitalize,noabbrev]{cleveref}
\usepackage{array}
\newcolumntype{P}[1]{>{\centering\arraybackslash}p{#1}}

\theoremstyle{plain}
\newtheorem{theorem}{Theorem}[section]

\newtheorem{lemma}[theorem]{Lemma}
\newtheorem{corollary}[theorem]{Corollary}
\theoremstyle{definition}
\newtheorem{definition}[theorem]{Definition}

\theoremstyle{remark}

\usepackage{smile}

\newcommand{\innerprod}[1]{\left\langle#1\right\rangle}

\def\##1\#{\begin{align}#1\end{align}}
\def\$#1\${\begin{align*}#1\end{align*}}

\def\oper{\mathop{\text{op}}}

\def\rmax{r_{\text{\rm max}}}

\title{The Provable Benefits of Unsupervised Data Sharing for Offline Reinforcement Learning}


\author{Hao Hu$^1$\footnotemark[1], Yiqin Yang$^2$\footnotemark[1], Qianchuan Zhao$^2$, Chongjie Zhang$^1$\\
$^1$Institute for Interdisciplinary Information Sciences, Tsinghua University\\
$^2$Department of Automation, Tsinghua University\\
\texttt{\{huh22,yangyiqi19\}@mails.tsinghua.edu.cn} \\
\texttt{\{chongjie,zhaoqc\}@tsinghua.edu.cn}
}

%

\usepackage{verbatim}

\iclrfinalcopy
\begin{document}

\maketitle

\renewcommand{\thefootnote}{\fnsymbol{footnote}}
\footnotetext[1]{Equal contribution.}

\begin{abstract}

Self-supervised methods have become crucial for advancing deep learning by leveraging data itself to reduce the need for expensive annotations. However, the question of how to conduct self-supervised offline reinforcement learning (RL) in a principled way remains unclear.
In this paper, we address this issue by investigating the theoretical benefits of utilizing reward-free data in linear Markov Decision Processes (MDPs) within a semi-supervised setting.
    Further, we propose a novel, \textbf{P}rovable \textbf{D}ata \textbf{S}haring algorithm (PDS) to utilize such reward-free data for offline RL. PDS uses additional penalties on the reward function learned from labeled data to prevent overestimation, ensuring a conservative algorithm.
Our results on various offline RL tasks demonstrate that PDS significantly improves the performance of offline RL algorithms with reward-free data.
Overall, our work provides a promising approach to leveraging the benefits of unlabeled data in offline RL while maintaining theoretical guarantees. We believe our findings will contribute to developing more robust self-supervised RL methods.

\end{abstract}

\section{Introduction}

Offline reinforcement learning (RL) is a promising framework for learning sequential policies with pre-collected datasets. It is highly preferred in many real-world problems where active data collection and exploration is expensive, unsafe, or infeasible \citep{swaminathan2015batch,shalev2016safe,singh2020cog}. However, labeling large datasets with rewards can be costly and require significant human effort \citep{DBLP:conf/rss/SinghYFL19,wirth2017survey}. In contrast, unlabeled data can be cheap and abundant, making self-supervised learning with unlabeled data an attractive alternative. While self-supervised methods have achieved great success in computer vision and natural language processing tasks \citep{brown2020language,devlin2018bert,chen2020simple}, their potential benefits in offline RL are less explored.  


Several prior works \citep{ho2016generative, reddy2019sqil, kostrikov2019imitation} have explored demonstration-based approaches to eliminate the need for reward annotations, but these approaches require samples to be near-optimal. Another line of work focuses on data sharing from different datasets \citep{kalashnikov2021mt, yu2021conservative}. Still, it assumes that the dataset can be relabeled with oracle reward functions for the target task. These settings can be unrealistic in real-world problems where expert trajectories and reward labeling are expensive.

Incorporating reward-free datasets into offline RL is important but challenging due to the sequential and dynamic nature of RL problems. Prior work \citep{yu2022leverage} has shown that learning to predict rewards can be difficult, and simply setting the reward to zero can achieve good results. However, it's unclear how reward-prediction methods affect performance and whether reward-free data can provably benefit offline RL. This naturally leads to the following question:

\begin{center}
\textit{How can we leverage reward-free data to improve the performance of offline RL algorithms \\ in a principled way?}
\end{center}

To answer this question, we conduct a theoretical analysis of the benefits of utilizing unlabeled data in linear MDPs. Our analysis reveals that although unlabeled data can provide information about the dynamics of the MDP, it cannot reduce the uncertainty over reward functions. Based on this insight, we propose a model-free method named Provable Data Sharing (PDS), which adds uncertainty penalties to the learned reward functions to maintain a conservative algorithm. By doing so, PDS can effectively leverage the benefits of unlabeled data for offline RL while ensuring theoretical guarantees.


We demonstrate that PDS can cooperate with model-free offline methods while being simple and efficient. We conduct extensive experiments on various environments, including single-task domains like MuJoCo \citep{todorov2012mujoco} and Kitchen \citep{gupta2019relay}, as well as multi-task domains like AntMaze and Meta-World \citep{yu2020meta}. The results show that PDS improves significantly over previous methods like UDS \citep{yu2022leverage} and naive reward prediction methods.

Our main contribution is the Provable Data Sharing (PDS) algorithm, a novel method for utilizing unsupervised data in offline RL that provides theoretical guarantees. PDS adds uncertainty penalties to the learned reward functions and can be easily integrated with existing offline RL algorithms. Our experimental results demonstrate that PDS can achieve superior performance on various locomotion, navigation, and manipulation tasks. Overall, our work provides a promising approach to leveraging the benefits of unlabeled data in offline RL while maintaining theoretical guarantees and contributing to the development of more robust self-supervised RL methods.

\section{Related Work}
\paragraph{Offline Reinforcement Learning} Current offline RL methods~\citep{levine2020offline} can be roughly divided into policy constraint-based, uncertainty estimation-based, and model-based approaches.
Policy constraint methods aim to keep the policy close to the behavior under a probabilistic distance~\citep{siegel2020keep,yang2021believe,kumar2020conservative,fujimoto2021td3,yang2022flow,fujimoto2019off,hu2022role,kostrikov2021offline,ma2021offline,wang2021offline}.
Uncertainty estimation-based methods attempt to consider the Q-value prediction's confidence using dropout or ensemble techniques~\citep{an2021uncertainty,wu2021uncertainty}.
Last, model-based methods incorporates the uncertainty in the model space for conservative offline learning~\citep{yu2020mopo,yu2021combo,kidambi2020morel}.

\paragraph{Offline Data Sharing}
Prior works have demonstrated that data sharing across tasks can be beneficial by designing sophisticated data-sharing protocols~\citep{yu2021conservative}.
For example, previous studies have explored developing data sharing strategies by human effort~\citep{kalashnikov2021mt}, inverse RL~\citep{reddy2019sqil, li2020generalized}, and estimated Q-values~\citep{yu2021conservative}.
However, these data sharing works must assume the dataset can be relabeled with oracle rewards for the target task, which is a strong assumption since the high cost of reward labeling.
Therefore, effectively incorporating the unlabeled data into the offline RL algorithms is essential.
To solve this issue, some recent work~\citep{yu2022leverage} proposes simply applying zero rewards to unlabeled data.
In this work, we propose a principle way to leverage unlabeled data without the strong assumption of reward relabeling.



\paragraph{Reward Prediction} 
It is widely observed that reward shaping and intrinsic rewards can accelerate learning in online RL~\citep{mataric1994reward,ng1999policy,wu2016training,song2019playing,guo2016deep,abel2021expressivity}.
There are also extensive works that studies automatically designing reward functions using inverse RL~\citep{ng2000algorithms,fu2017learning}. However, there is less attention on the offline setting where online interaction is not allowed and the trajectories may not be optimal.


\section{Preliminaries}
\subsection{Linear MDPs and Performance Metric}
We consider infinite-horizon discounted Markov Decision Processes (MDPs), defined by the tuple $(\mathcal{S},\mathcal{A},\mathcal{P},r,\gamma),$ with state space $\mathcal{S}$, action space $\mathcal{A}$, discount factor $\gamma \in [0,1)$, transition function $\mathcal{P}: \mathcal{S}\times\mathcal{A}\rightarrow \Delta(\mathcal{S})$, and reward function $r:\mathcal{S}\times\mathcal{A}\rightarrow [0, r_{\text{max}}]$. To make things more concrete, we consider the \textit{linear MDP}~\citep{yang2019sample,jin2020provably} as follows, where the transition kernel and expected reward function are linear with respect to a feature map.

\begin{definition}[Linear MDP]
    We say an episodic MDP $(\cS,\cA,\cP,r, \gamma)$ is a linear MDP with known feature map $\phi:\cS\times \cA\to \RR^d$ if there exist unknown measures $\mu = (\mu_1,\ldots,\mu_d)$ over $\cS$ and an unknown vector $\theta \in \RR^d$ such that
    \#
    \cP(s'\given s,a) = \langle \phi(s,a), \mu(s') \rangle,\quad r(s, a) = \langle \phi(s,a), \theta \rangle
    \#
    for all $(s,a,s')\in \cS\times \cA\times \cS$. And we assume $\|\phi(s,a)\|_2\leq 1$ for all $(s,a,s')\in \cS\times \cA \times\cS$ and $\max\{\| \mu(\cS) \|_2 ,\|\theta\|_2\}\leq \sqrt{d}$, where $\|\mu(\cS)\| \equiv \int_{\cS} \|\mu(s)\| ds$.
    \label{assump:linear_mdp}
\end{definition}
A policy $\pi : \cS\rightarrow \Delta{(\cA)}$ specifies a decision-making strategy in which the agent chooses actions adaptively based on the current state, i.e., $a_t \sim \pi(\cdot \given s_t)$. The value function $V^\pi: \cS \rightarrow \RR$ and the action-value function (Q-function) $Q^\pi:\cS\times \cA\to \RR$ are defined as
\begin{equation}
V^\pi(s) = \EE_{\pi}\Big[ \sum_{t=0}^{\infty} \gamma^t r(s_t, a_t)\Biggiven s_0=s  \Big], \quad Q^\pi(s,a) = \EE_\pi\Big[\sum_{t=0}^{\infty} \gamma^tr(s_t, a_t)\Biggiven s_0=s, a_0=a  \Big].
\label{eq:def_value_fct}
\end{equation}
where the expectation is with respect to the trajectory $\tau$ induced by policy $\pi$. 

We define  the Bellman operator as
\begin{align}
(\BB f)(s,a) = \EE_{s'\sim p(\cdot|s, a)}\bigl[r(s, a) + \gamma f(s')\bigr].
\label{eq:def_bellman_op}
\end{align}
We use $\pi^*$, $Q^*$, and $V^*$ to denote the optimal policy, optimal Q-function, and optimal value function, respectively. We have the Bellman optimality equation
\begin{equation}
 V^*(s) = \max_{a\in \cA}Q^*(s,a),\quad Q^*(s,a) = (\BB V^*) (s,a).
\label{eq:dp_optimal_values}
\end{equation}
Meanwhile, the optimal policy $\pi^*$ satisfies
\begin{equation*}
   \pi^* (\cdot \given s)=\argmax_{\pi}\langle Q^*(s, \cdot),\pi(\cdot\given s)\rangle_{\cA},\quad V^*(s)= \langle Q^*(s, \cdot),\pi^*(\cdot\given s)\rangle_{\cA}, 
\end{equation*}

where the maximum is taken over all functions mapping from $\cS$ to distributions over $\cA$. 
We aim to learn a policy that maximizes the expected cumulative reward. Correspondingly, we define the performance metric as   
\begin{equation}
\text{SubOpt}(\pi,s) = V^{\pi^*}(s) - V^\pi(s).
\label{eq:def_regret}
\end{equation}

\subsection{Provable Offline Algorithms}
In this section, we consider \textit{pessimistic value iteration} \citep[PEVI;][]{jin2021pessimism} as the backbone algorithm, described in Algorithm~\ref{alg:1}. It is a representative model-free offline algorithm with theoretical guarantees. PEVI uses negative bonus $\Gamma(\cdot,\cdot)$ over standard $Q$-value estimation $\hat{Q}(\cdot,\cdot) =  (\hat\BB \hat{V})(\cdot)$ to reduce potential bias due to finite data, where $\hat\BB$ is some empirical estimation of $\BB$ from dataset $\cD$. Please refer to Appendix~\ref{algo_describ} for more details of the PEVI algorithm.


We use the following notion of  $\xi$-uncertainty quantifier as follows to formalize the idea of pessimism.
\begin{definition}[$\xi$-Uncertainty Quantifier] 
    We say $\Gamma :\cS\times\cA\to \RR$ is a $\xi$-uncertainty quantifier for $\hat\BB$ and $\widehat{V}$ if with probability $1-\xi$, for all~$(s,a)\in \cS\times \cA$,
    \begin{equation}
     \big|(\hat\BB\hat{V})(s,a) - (\BB\hat{V})(s,a)\big|\leq \Gamma(s,a).
    \label{eq:def_event_eval_err_general}
    \end{equation}
    \label{def:uncertainty_quantifier}
\end{definition}



\subsection{Unsupervised Data Sharing}

We consider unsupervised data sharing in offline reinforcement learning. We first characterize the quality of the dataset with the notion of coverage coefficient~\citep{uehara2021pessimistic}, defined as below. 
\begin{definition}
    The coverage coefficient $C^\dagger$ of a dataset $\cD=\{(s_\tau,a_\tau,r_\tau)\}_{\tau=1}^{N}$ is defined as
    \begin{align*}
        \label{eq:event_opt_explore}
        C^\dagger = \sup_{C}\left\{ \frac{1}{N} \cdot\sum_{\tau=1}^N{\phi(s_\tau,a_\tau)\phi(s_\tau,a_\tau)^\top}\succeq  C \cdot  \EE_{\pi^*}\bigl[\phi(s_t,a_t)\phi(s_t,a_t)^\top\biggiven s_0=s\bigr], \forall s\in \cS\right\}, 
    \end{align*}
\end{definition}

The coverage coefficient $C^\dagger$ is common in offline RL literature~\citep{uehara2021pessimistic,jin2021pessimism,rashidinejad2021bridging}, which represents the maximum ratio between the density of empirical state-action distribution and the density induced from the optimal policy. Intuitively, it represents the quality of the dataset. For example, the \verb|expert| dataset has a high coverage ratio while the \verb|random| dataset may have a low ratio.

We denote $\cD_0$ as the origin labeled dataset, with coverage coefficient $C_0^\dagger$ and size $N_0$. And we denote the unlabeled dataset as $\cD_1$, with coverage coefficient $C_1^\dagger$ and size $N_1$. Note that it is possible that the unlabeled data comes from multiple sources, such as multi-task settings, and we still use $\cD_1$ to represent the combined dataset $\cup_{i=1}^{M} \cD_i$ from $M$ tasks for simplicity.

\section{Provable Unsupervised Data Sharing}
How can we leverage reward-free data for offline RL? A naive approach is to learn the reward function from labeled data via the following regression
\begin{equation}
\label{eq:reward_learning}
\widehat{\theta} = \argmin_{\theta} \sum_{\tau=1}^{N_0} {\left(f_\theta(s_\tau,a_\tau) - r_\tau\right)^2} +\frac{\nu}{2} \|\theta\|^2_2,
\end{equation}
where $f_\theta(s_\tau,a_\tau) = \phi(s_\tau,a_\tau)^{\top}\theta$ in linear MDPs. Then we can use this learned reward function to label unsupervised data. 
However, this approach can lead to suboptimal performance due to overestimation of the predicted reward $r_{\hat{\theta}}$~\citep{yu2022leverage}, which undermines the pessimism in offline algorithms. 

To address this issue, we propose a data-sharing algorithm called Provable Data Sharing (PDS). We start by analyzing the uncertainty in learned reward functions and add penalties for such uncertainty to leverage unlabeled data. In Section~\ref{sec:theory}, we show that PDS has a provable performance bound consisting of two parts: the offline error, which is tightened compared to no data sharing due to additional data, and the error from reward bias. The performance bound of PDS is provably better than no data sharing as long as the unlabeled dataset has mediocre size or quality. We also extend our algorithm in linear MDPs to general settings in Section~\ref{sec:practice} and propose using ensembles for reward uncertainty estimation. We demonstrate the effectiveness of PDS by integrating it with IQL~\citep{kostrikov2021offline}, and present Algorithm~\ref{alg:3}, which is simple and can be easily integrated with other model-free offline algorithms.

\subsection{Provable Data Sharing}
To address the issue of potential overestimation of predicted rewards, we first analyze the uncertainty in learned reward functions. In the context of linear MDPs, the reward function can be learned via linear regression, and the uncertainty of the parameters is characterized by the elliptical confidence region, as shown in Lemma~\ref{lem:reward_learning}. This confidence region is important as it allows us to give a more accurate estimation of the reward function while keeping the overall algorithm pessimistic.

\begin{lemma}[ \citet{abbasi2011improved}]
    \label{lem:reward_learning}
    Let $\alpha = \sqrt{\nu}+\rmax\cdot\sqrt{2\log{\frac{1}{\delta}}+d\log{(1+\frac{N_0}{\nu d})}}, \Lambda = \nu I+\sum_{\tau=1}^{N_0}{\phi(s_\tau,a_\tau)\phi(s_\tau,a_\tau)^{\top}}$,
    \begin{equation}
        \label{eq:reward_confidence_set}
        \quad \cC(\delta) = \left\{\theta \in \RR^d \mid \|\theta-\widehat{\theta}\|_{\Lambda} \leq \alpha\right\},
    \end{equation}
    where $\widehat{\theta}$ is the minimizer in Equation~\eqref{eq:reward_learning}, then we have $\cP(\theta^\star\in\cC(\delta))\geq 1-\delta$, where $\theta^\star$ is the true parameter for the reward function.

\end{lemma}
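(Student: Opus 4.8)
This lemma is exactly the self-normalized confidence ellipsoid for regularized least squares due to \citet{abbasi2011improved}, so the plan is to reconstruct that argument in the present notation. Write $\phi_\tau = \phi(s_\tau,a_\tau)$ and decompose each observed reward as $r_\tau = \langle \phi_\tau,\theta^\star\rangle + \eta_\tau$; since $r_\tau \in [0,\rmax]$, the noise $\eta_\tau$ is, conditionally on the history and on $\phi_\tau$, mean zero and supported on an interval of length $\rmax$, hence sub-Gaussian with parameter $\rmax$. The first step is the standard ridge-regression identity: the minimizer of \eqref{eq:reward_learning} is $\widehat\theta = \Lambda^{-1}\sum_{\tau=1}^{N_0}\phi_\tau r_\tau$, so $\widehat\theta - \theta^\star = \Lambda^{-1} S_{N_0} - \nu\Lambda^{-1}\theta^\star$ with $S_{N_0} = \sum_{\tau=1}^{N_0}\phi_\tau\eta_\tau$. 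Taking $\Lambda$-norms and using the triangle inequality together with $\Lambda^{-1}\preceq \nu^{-1}I$ gives
\[
\|\widehat\theta - \theta^\star\|_{\Lambda} \;\le\; \|S_{N_0}\|_{\Lambda^{-1}} + \nu\|\theta^\star\|_{\Lambda^{-1}} \;\le\; \|S_{N_0}\|_{\Lambda^{-1}} + \sqrt{\nu}\,\|\theta^\star\|_2 ,
\]
so the whole problem reduces to a high-probability upper bound on the self-normalized martingale term $\|S_{N_0}\|_{\Lambda^{-1}}$, with the $\sqrt{\nu}$ summand in $\alpha$ arising from the a priori bound on $\|\theta^\star\|_2$.

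The crux is the \emph{method of mixtures}. For a fixed $\lambda \in \RR^d$, sub-Gaussianity of the $\eta_\tau$ makes $M_t^\lambda = \exp\!\big(\sum_{\tau\le t}[\langle\lambda,\phi_\tau\rangle\eta_\tau/\rmax - \tfrac12\langle\lambda,\phi_\tau\rangle^2]\big)$ a nonnegative supermartingale with $M_0^\lambda = 1$ with respect to the natural filtration. I would then mix over $\lambda$ against the Gaussian density $\mathcal{N}(0,\nu^{-1}I)$; the resulting $\overline M_t = \int M_t^\lambda\, \mathrm{d}\mathcal{N}(0,\nu^{-1}I)(\lambda)$ is again a nonnegative supermartingale with $\overline M_0 = 1$, and the inner Gaussian integral evaluates in closed form to $\overline M_t = \big(\det(\nu I)/\det\Lambda_t\big)^{1/2}\exp\!\big(\|S_t\|_{\Lambda_t^{-1}}^2/(2\rmax^2)\big)$, where $\Lambda_t = \nu I + \sum_{\tau\le t}\phi_\tau\phi_\tau^\top$ and $\Lambda_{N_0}=\Lambda$. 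Applying Ville's maximal inequality to $\overline M_t$ (equivalently, the stopped-supermartingale argument of \citet{abbasi2011improved}) yields that, with probability at least $1-\delta$ and simultaneously for all $t$,
\[
\|S_t\|_{\Lambda_t^{-1}}^2 \;\le\; 2\rmax^2\log\frac{1}{\delta} + \rmax^2\log\frac{\det\Lambda_t}{\det(\nu I)}.
\]

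It then remains to control the determinant ratio. Since $\|\phi_\tau\|_2 \le 1$ we have $\mathrm{tr}\big(\sum_{\tau=1}^{N_0}\phi_\tau\phi_\tau^\top\big) \le N_0$, so the eigenvalues $\lambda_1,\dots,\lambda_d$ of $\Lambda$ satisfy $\sum_i(\lambda_i - \nu) \le N_0$, and AM--GM gives $\det\Lambda/\det(\nu I) = \prod_i \lambda_i/\nu \le (1 + N_0/(\nu d))^d$, hence $\log(\det\Lambda/\det(\nu I)) \le d\log(1 + N_0/(\nu d))$. Substituting into the previous display with $t = N_0$ gives $\|S_{N_0}\|_{\Lambda^{-1}} \le \rmax\sqrt{2\log(1/\delta) + d\log(1 + N_0/(\nu d))}$; combined with the first display (and the bound on $\|\theta^\star\|_2$) this is exactly $\|\widehat\theta - \theta^\star\|_\Lambda \le \alpha$ on the same $1-\delta$ event, i.e.\ $\theta^\star \in \cC(\delta)$ as in \eqref{eq:reward_confidence_set}. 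I expect the only genuinely delicate step to be the mixture construction and the maximal inequality that deliver the self-normalized bound on $\|S_{N_0}\|_{\Lambda^{-1}}$; the ridge-regression algebra and the determinant estimate are routine, and in any case all of this is available off the shelf from \citet{abbasi2011improved}.
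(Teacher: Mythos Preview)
Your proposal is correct and coincides with the paper's approach: the paper does not give an independent argument but simply defers to Theorem~2 of \citet{abbasi2011improved}, and you have faithfully reconstructed exactly that proof (ridge-regression decomposition $\widehat\theta-\theta^\star=\Lambda^{-1}S_{N_0}-\nu\Lambda^{-1}\theta^\star$, the method-of-mixtures / Ville supermartingale bound on $\|S_{N_0}\|_{\Lambda^{-1}}$, and the AM--GM estimate on $\det\Lambda/\det(\nu I)$). The only point worth tightening is that the $\sqrt{\nu}$ summand in $\alpha$ implicitly uses $\|\theta^\star\|_2\le 1$, whereas Definition~\ref{assump:linear_mdp} only guarantees $\|\theta^\star\|_2\le\sqrt d$; this is an artifact of the lemma's stated constant rather than a flaw in your argument.
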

\begin{proof}
    Please refer to Theorem 2 in \citet{abbasi2011improved} for detailed proof.
\end{proof}

Lemma~\ref{lem:reward_learning} provides a useful insight: the uncertainty of the learned reward function in linear MDPs only depends on the quality and size of \textit{labeled} data. Based on this insight, we propose a two-phase algorithm that guarantees a provable performance bound, as shown in Theorem~\ref{theorem:0}. The simple reward prediction method can compromise the pessimistic estimation of the algorithm, while UDS may result in a reward bias that is too large. Our algorithm consists of two phases: in the first phase, we construct a pessimistic reward estimator that finds the reward function in the confidence set that leads to the lowest optimal value. In the second phase, we conduct standard offline RL with the given pessimistic reward function. 

To solve the challenge of finding the best parameter in the confidence set, which is a bi-level optimization problem, we propose using a simpler method that maintains the pessimistic property of the offline algorithms. This method involves using a pessimistic estimation, which allows us to keep the algorithm pessimistic while avoiding the computational challenges of the bi-level optimization problem. Formally,

\begin{equation}
    \label{eq:reward_estimation}
    \hat{r}(s,a) = \max \left\{\phi(s,a)^{\top}\widehat{\theta} - \alpha\sqrt{\phi(s,a)^{\top}\Lambda^{-1}\phi(s,a)} ,0\right\},
\end{equation}

where $\Lambda = \nu I+\sum_{\tau=1}^{N_0}{\phi(s_\tau,a_\tau)\phi(s_\tau,a_\tau)^{\top}}$.


We adopt the pessimistic estimation in Equation~\eqref{eq:reward_estimation} because it provides a lower bound for reward functions in the confidence set $\cC(\delta)$, as guaranteed by the following lemma derived from Cauchy-Schwartz inequalities.
\begin{lemma}
    For any $\theta\in\cC(\delta)$,
    \begin{equation}
        \left|\phi(s,a)^{\top}\theta-\phi(s,a)^{\top}\widehat{\theta}\right| \leq \alpha \sqrt{\phi(s,a)^{\top}\Lambda^{-1}\phi(s,a)}.
    \end{equation}
\end{lemma}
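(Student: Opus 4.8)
The plan is to reduce the claim to a single application of the Cauchy--Schwarz inequality in the inner product induced by $\Lambda$. First I would fix the state-action pair and abbreviate $\phi \equiv \phi(s,a)$ and $\Delta \equiv \theta - \widehat\theta$, so that the quantity to bound is simply $|\phi^\top \Delta|$. Since $\Lambda = \nu I + \sum_{\tau=1}^{N_0}\phi(s_\tau,a_\tau)\phi(s_\tau,a_\tau)^\top$ is a sum of a positive-definite matrix $\nu I$ (taking $\nu>0$) and positive-semidefinite rank-one terms, $\Lambda$ is positive definite, hence invertible, and admits a symmetric positive-definite square root $\Lambda^{1/2}$ with inverse $\Lambda^{-1/2}$.

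The key algebraic step is the insertion of $\Lambda^{-1/2}\Lambda^{1/2} = I$:
\begin{equation*}
\phi^\top \Delta = \phi^\top \Lambda^{-1/2}\Lambda^{1/2}\Delta = \big(\Lambda^{-1/2}\phi\big)^\top\big(\Lambda^{1/2}\Delta\big).
\end{equation*}
Applying the ordinary Cauchy--Schwarz inequality to the two vectors $\Lambda^{-1/2}\phi$ and $\Lambda^{1/2}\Delta$ gives
\begin{equation*}
\big|\phi^\top\Delta\big| \le \big\|\Lambda^{-1/2}\phi\big\|_2 \cdot \big\|\Lambda^{1/2}\Delta\big\|_2 = \sqrt{\phi^\top\Lambda^{-1}\phi}\;\cdot\;\sqrt{\Delta^\top\Lambda\Delta} = \sqrt{\phi^\top\Lambda^{-1}\phi}\;\cdot\;\|\theta-\widehat\theta\|_\Lambda.
\end{equation*}

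Finally I would invoke the hypothesis $\theta \in \cC(\delta)$, which by the definition in Equation~\eqref{eq:reward_confidence_set} is exactly the statement $\|\theta - \widehat\theta\|_\Lambda \le \alpha$. Substituting this bound into the last display yields $|\phi(s,a)^\top\theta - \phi(s,a)^\top\widehat\theta| \le \alpha\sqrt{\phi(s,a)^\top\Lambda^{-1}\phi(s,a)}$, which is the claim. There is no real obstacle here: the only point requiring a word of care is the existence of $\Lambda^{\pm 1/2}$, which is immediate from $\nu>0$ making $\Lambda \succ 0$; everything else is the standard weighted Cauchy--Schwarz estimate. (One could equivalently cite the generalized Cauchy--Schwarz inequality $|x^\top y| \le \|x\|_{M^{-1}}\|y\|_M$ directly and skip the explicit square-root manipulation.)
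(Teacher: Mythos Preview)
Your proof is correct and is essentially identical to the paper's approach: the paper states the lemma is ``derived from Cauchy--Schwartz inequalities,'' and in the proof of Lemma~\ref{lemma:sub_rew} carries out exactly your computation, writing $|\phi(s,a)^{\top}\theta-\phi(s,a)^{\top}\widehat{\theta}| \leq \|\theta-\widehat{\theta}\|_{\Lambda}\cdot\|\phi(s,a)\|_{\Lambda^{-1}} \leq \alpha \sqrt{\phi(s,a)^{\top}\Lambda^{-1} \phi(s,a)}$ and citing Cauchy--Schwarz for the first inequality and $\theta\in\cC(\delta)$ for the second.
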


When labeled data is scarce or there is a significant distributional shift between the labeled and unlabeled data, Equation~\eqref{eq:reward_estimation} degenerates to 0, which is equivalent to the UDS algorithm~\citep{yu2022leverage}.
\begin{algorithm}[H]
    \caption{Provable Data Sharing, Linear MDP}\label{alg:2}
    \begin{algorithmic}[1]
    \STATE {\bf Require}: Labeled dataset $\cD_0=\{(s_\tau,a_\tau,r_\tau)\}_{\tau=1}^{N_0}$, unlabeled dataset $\cD_1=\{(s_\tau,a_\tau)\}_{\tau=1}^{N_1}$.
    \STATE {\bf Require}: Confidence parameter $\alpha,\beta,\delta$.
    \STATE Learn the reward function $\hat{\theta}$ from $\cD_0$ using Equation \eqref{eq:reward_learning}
    \STATE Construct the confidence set $\cC(\delta)$ using Equation \eqref{eq:reward_confidence_set}.
    \STATE Construct the pessimistic reward over the confidence set
    \begin{equation}
        \label{eq:minimax}
        \tilde{\theta} \leftarrow \argmin_{\theta\in \cC(\delta)} \widehat{V}_\theta,
    \end{equation}
    where $\widehat{V}_\theta$ is the estimated value function from Algorithm~\ref{alg:1} and dataset $\cD_0 \cup \cD_1$ with reward relabeled with parameter $\theta$.
    \STATE Annotate the reward in $\cD_0 \cup \cD_1$ with parameter $\tilde{\theta}$.
    \STATE Learn the policy from the annotated dataset $\cD_0 \cup \cD_1$ using Algorithm~\ref{alg:1}
    \begin{equation}
        \widehat{V},\widehat{\pi}\leftarrow \text{PEVI}(\cD_0 \cup \cD_1).
        \end{equation}
    \STATE \textbf{Return} $\hat\pi$
    \end{algorithmic}
\end{algorithm}

\subsection{Theoretical Analysis}
\label{sec:theory}
The following subsection analyzes how the provable data-sharing (PDS) algorithm can enhance the performance bound by leveraging unlabeled data. To be specific, we present the following theorem.

\begin{theorem}[Performance Bound for PDS]
    \label{theorem:0}
    Suppose the dataset $\cD_0, \cD_1$ have positive coverage coefficients $C^\dagger_0, C^\dagger_1$, and the underlying MDP is a linear MDP. In Algorithm \ref{alg:2}, we set 
    \begin{equation*}
        \lambda =1, \nu=1, \alpha = 2\sqrt{ d \zeta_2}\cdot\rmax, \beta= \frac{c d \sqrt{\zeta_1}}{1-\gamma}\cdot \rmax, \zeta_1 = \log{\left(\frac{4d(N_0+N_1)}{(1-\gamma)\delta}\right)}, \zeta_2= \log{\left(\frac{2dN_0}{\delta}\right)},
    \end{equation*}
    where $c>0$ is an absolute constant and $\delta \in (0,1)$ is the confidence parameter. 
    Then with probability $1-2\delta$, the policy $\widehat{\pi}$ generated by PDS satisfies for all $s\in \cS$,
    \begin{align*}
     \text{\rm SubOpt}\big(\widehat{\pi};s \big) 
    \leq \frac{2cr_{\text{\rm max}}}{(1-\gamma)^2} \sqrt{\frac{d^3\zeta_1}{N_0 C^\dagger_0+N_1C^\dagger_1}} + \frac{4 r_{\text{\rm max}}}{1-\gamma} \sqrt{\frac{d^2\zeta_2}{N_0C_0^\dagger}}.
    \end{align*}
    
\end{theorem}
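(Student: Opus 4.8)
The plan is to reduce the bound to the pessimistic value iteration (PEVI) analysis of \citet{jin2021pessimism}, applied not to the true MDP $\mathcal{M}$ but to the \emph{relabeled} MDP $\widetilde{\mathcal{M}}$ that keeps the transition kernel $\cP$ and replaces $r$ by the pessimistic reward of Equation~\eqref{eq:reward_estimation}, $\widehat r(s,a)=\max\{\phi(s,a)^\top\widehat\theta-\alpha\sqrt{\phi(s,a)^\top\Lambda^{-1}\phi(s,a)},0\}$ (I use $\widehat r$ rather than the minimax $\widetilde\theta$ because it is a \emph{pointwise} lower bound of every reward in $\cC(\delta)$, which is what makes term~(III) below nonpositive). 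The argument rides on two high-probability events. First, Lemma~\ref{lem:reward_learning} gives $\theta^\star\in\cC(\delta)$ with probability $\ge 1-\delta$; on that event the Cauchy--Schwarz bound $|\phi^\top\theta-\phi^\top\widehat\theta|\le\alpha\sqrt{\phi^\top\Lambda^{-1}\phi}$ for $\theta\in\cC(\delta)$ yields $0\le r(s,a)-\widehat r(s,a)\le 2\alpha\sqrt{\phi(s,a)^\top\Lambda^{-1}\phi(s,a)}$ for all $(s,a)$, so the reward error is controlled \emph{everywhere} by the labeled-data matrix $\Lambda$ alone. Second, I would verify that $\Gamma(s,a)=\beta\sqrt{\phi(s,a)^\top\widehat\Lambda^{-1}\phi(s,a)}$ with $\widehat\Lambda=\lambda I+\sum_{\tau=1}^{N_0+N_1}\phi(s_\tau,a_\tau)\phi(s_\tau,a_\tau)^\top$ (the \emph{pooled} design matrix) is a $\delta$-uncertainty quantifier (Definition~\ref{def:uncertainty_quantifier}) for the empirical Bellman operator of $\widetilde{\mathcal{M}}$ on $\cD_0\cup\cD_1$: since $\widehat r$ is a fixed bounded function, the only stochasticity in $(\widehat\BB\widehat V)(s,a)-(\BB_{\widetilde{\mathcal{M}}}\widehat V)(s,a)$ comes from transition noise, so this is the standard linear-MDP self-normalized concentration plus an $\epsilon$-net over PEVI value functions, giving the stated $\beta\asymp\frac{d\,\rmax\sqrt{\zeta_1}}{1-\gamma}$. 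A union bound over the two events gives failure probability $2\delta$.

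On these events I would decompose the true-MDP suboptimality of the output policy $\widehat\pi$ by inserting the relabeled-MDP values $\widetilde V^\pi$:
\begin{align*}
V^{\pi^*}(s)-V^{\widehat\pi}(s)
&= \underbrace{\prns{V^{\pi^*}(s)-\widetilde V^{\pi^*}(s)}}_{\text{(I): reward bias at }\pi^*}
+\underbrace{\prns{\widetilde V^{\pi^*}(s)-\widetilde V^{\widehat\pi}(s)}}_{\text{(II): offline error in }\widetilde{\mathcal{M}}}
+\underbrace{\prns{\widetilde V^{\widehat\pi}(s)-V^{\widehat\pi}(s)}}_{\text{(III)}\,\le\,0}.
\end{align*}
Term~(III) is nonpositive by $\widehat r\le r$ --- the crucial point, since it keeps any $\EE_{\widehat\pi}[\cdot]$ quantity (which the coverage coefficient does not control) out of the bound. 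Term~(I) equals $\EE_{\pi^*}\bracks{\sum_{t\ge0}\gamma^t(r-\widehat r)(s_t,a_t)\given s_0=s}\le 2\alpha\,\EE_{\pi^*}\bracks{\sum_{t\ge0}\gamma^t\sqrt{\phi(s_t,a_t)^\top\Lambda^{-1}\phi(s_t,a_t)}\given s_0=s}$. For Term~(II) I would invoke the PEVI suboptimality decomposition, which for \emph{any} comparator (take $\pi^*$) gives $\widetilde V^{\pi^*}(s)-\widetilde V^{\widehat\pi}(s)\le 2\,\EE_{\pi^*}\bracks{\sum_{t\ge0}\gamma^t\Gamma(s_t,a_t)\given s_0=s}$: the policy-difference cross term is nonpositive since $\widehat\pi$ is greedy for $\widehat Q$, the $-\EE_{\widehat\pi}[\text{Bellman residual}]$ term is nonpositive by pessimism, and the surviving Bellman-residual term is at most $2\Gamma$ along $\pi^*$.

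It then remains to convert the two elliptical-norm expectations. With $\Sigma^*_s=\EE_{\pi^*}\bracks{\phi(s_t,a_t)\phi(s_t,a_t)^\top\given s_0=s}$, the definition of $C^\dagger_0,C^\dagger_1$ yields $\sum_{\tau\in\cD_0}\phi_\tau\phi_\tau^\top\succeq N_0C^\dagger_0\Sigma^*_s$ and $\sum_{\tau\in\cD_0\cup\cD_1}\phi_\tau\phi_\tau^\top\succeq(N_0C^\dagger_0+N_1C^\dagger_1)\Sigma^*_s$; adding $\nu I$ or $\lambda I$ only helps and keeps the matrices invertible, so along $\pi^*$-trajectories $\phi^\top\Lambda^{-1}\phi\le(N_0C^\dagger_0)^{-1}\phi^\top(\Sigma^*_s)^{+}\phi$ and $\phi^\top\widehat\Lambda^{-1}\phi\le(N_0C^\dagger_0+N_1C^\dagger_1)^{-1}\phi^\top(\Sigma^*_s)^{+}\phi$. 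Jensen gives $\EE_{\pi^*}\bracks{\sqrt{\phi^\top(\Sigma^*_s)^{+}\phi}}\le\sqrt{\mathrm{tr}\prns{(\Sigma^*_s)^{+}\Sigma^*_s}}\le\sqrt d$, so after $\sum_{t\ge0}\gamma^t=(1-\gamma)^{-1}$, Term~(I) $\le\frac{2\alpha}{1-\gamma}\sqrt{\frac{d}{N_0C^\dagger_0}}$ and Term~(II) $\le\frac{2\beta}{1-\gamma}\sqrt{\frac{d}{N_0C^\dagger_0+N_1C^\dagger_1}}$. Substituting $\alpha=2\sqrt{d\zeta_2}\,\rmax$ and $\beta=\frac{cd\sqrt{\zeta_1}}{1-\gamma}\rmax$ reproduces the two summands in the theorem exactly.

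The main obstacle I anticipate is the bookkeeping that keeps the reward-bias contribution tied to the labeled matrix $\Lambda$ (so it scales only with $N_0C^\dagger_0$) while the Bellman-estimation contribution rides the pooled matrix $\widehat\Lambda$ (so it benefits from $N_0C^\dagger_0+N_1C^\dagger_1$), together with insisting on a pointwise reward \emph{lower} bound so that the sign of term~(III) stays favorable; a close second is confirming the uncertainty-quantifier event with the precise constant $\beta$, which is however essentially the linear-MDP concentration lemma already used for PEVI, here made easier because the relabeled reward is a known deterministic function on the pooled data.
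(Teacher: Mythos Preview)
Your decomposition is clean, but it proves the bound for the wrong algorithm. Theorem~\ref{theorem:0} is stated for Algorithm~\ref{alg:2}, whose output $\widehat\pi$ is the PEVI policy under the \emph{minimax} reward $r_{\tilde\theta}=\phi^\top\tilde\theta$ from Equation~\eqref{eq:minimax}, not under the pointwise-penalized $\widehat r$ of Equation~\eqref{eq:reward_estimation}. Your Term~(II), $\widetilde V^{\pi^*}-\widetilde V^{\widehat\pi}$, is only controlled by the PEVI suboptimality lemma when $\widehat\pi$ is the PEVI policy \emph{for the reward used to define $\widetilde V$}. Since Algorithm~\ref{alg:2} runs PEVI with $r_{\tilde\theta}$, not $\widehat r$, that step does not apply to the actual $\widehat\pi$. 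If instead you set the relabeled reward to $r_{\tilde\theta}$ so that Term~(II) is legitimate, Term~(III) fails: there is no pointwise guarantee $r_{\tilde\theta}\le r$, because $\tilde\theta$ is only the minimizer of $\widehat V_\theta$ over $\cC(\delta)$, not a pointwise lower envelope. The paper avoids exactly this trap by decomposing through the \emph{estimated} values $\widehat V_\theta$ rather than true values: it uses Lemma~\ref{lemma:subopt_part2} to drop $\widehat V_{\theta^\star}-V_{\theta^\star}^{\widehat\pi}$, then inserts $\widehat V_{\tilde\theta}$ and drops $\widehat V_{\tilde\theta}-\widehat V_{\theta^\star}\le 0$ via the minimax definition, and finally bounds $V_{\tilde\theta}^{\pi^*}-\widehat V_{\tilde\theta}$ by Lemma~\ref{lemma:subopt_part1} and $V_{\theta^\star}^{\pi^*}-V_{\tilde\theta}^{\pi^*}$ by Lemma~\ref{lemma:sub_rew}. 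The minimax step is the load-bearing idea you are missing.

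A secondary issue: even if you switch targets and prove the bound for the $\widehat r$-variant, note that $\widehat r(s,a)=\max\{\phi^\top\widehat\theta-\alpha\|\phi\|_{\Lambda^{-1}},0\}$ is not linear in $\phi$, so the relabeled MDP is not a linear MDP and Lemma~\ref{lemma:xi_quantifier} does not apply verbatim. Your remark that ``the only stochasticity comes from transition noise'' handles the martingale part, but the deterministic piece $\phi^\top\widehat\Lambda^{-1}\sum_\tau\phi_\tau\widehat r_\tau-\widehat r(s,a)$ is the ridge-regression residual of a nonlinear target and is not automatically dominated by $\Gamma$. With $r_{\tilde\theta}$ this problem disappears because the reward is again linear and $\|\tilde\theta\|\le\sqrt d$, so the paper's Lemma~\ref{lemma:xi_quantifier} applies directly; that is another reason the paper routes everything through $\tilde\theta$.
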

\begin{proof}
    Please refer to Appendix~\ref{sec:proof} for detailed proof.
\end{proof}

The performance bound of PDS is composed of two terms. The first is the offline error, which is inherited from offline algorithms. This bound is improved when additional unlabeled data with size $N_1$ and coverage $C_1^\dagger$ is available. The second term is the reward bias, which arises due to uncertainties in the rewards. Notably, this term is equivalent to the performance bound of a linear bandit with rewards in the range $[0,\rmax/(1-\gamma)]$. As the number of unlabeled data approaches infinity, the uncertainty of the dynamics decreases to zero, and the RL problem becomes a linear bandit problem. The theorem demonstrates that PDS outperforms UDS, which suffers from a constant reward bias, and naive reward prediction methods, which lack pessimism and therefore do not offer such guarantees. Moreover, we demonstrate the tightness of the bound by constructing an ``adversarial'' dataset that matches the bound's suboptimality (see Appendix~\ref{bound_discussion}).

To better understand the benefits of unlabeled data, we define the suboptimality bound ratio (SBR) of an offline algorithm $\cA$ as the ratio of the suboptimality bound obtained by the policy learned with additional unlabeled data to the suboptimality bound of the policy learned using labeled data alone. Mathematically, the SBR of $\cA$ is given by:
\begin{equation}
\text{\rm SBR}(\cA) = \frac{\overline{\text{\rm SubOpt}}\big(\widehat{\pi}{\cA}(\cD_0, \cD_1) \big) }{\overline{\text{\rm SubOpt}}\big(\widehat{\pi}{\cA}(\cD_0, \varnothing ) \big)},
\end{equation}
where $\overline{\text{\rm SubOpt}}$ is the tight upper bound on suboptimality. The SBR provides a measure of the benefit of unlabeled data to the offline algorithm, with a smaller SBR indicating a greater benefit from the unlabeled data. Applying this definition to PDS, we obtain the following corollary.

\begin{corollary}[Informal]
    \label{col:1}
    The SBR of PDS satisfies
    \begin{equation}
        \text{\rm SBR} \approx \underbrace{\sqrt{\frac{N_0C^\dagger_0}{N_0C^\dagger_0+N_1C^\dagger_1}}}_{\text{\rm finite sample term}} + 
        \underbrace{\frac{2(1-\gamma)}{c \sqrt{d}}}_{\text{\rm asymptotic term}},
    \end{equation}
    where $c$ is the constant in Theorem~\ref{theorem:0} and we ignore the logarithmic factors.
\end{corollary}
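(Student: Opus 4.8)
\textbf{Proof proposal for Corollary~\ref{col:1}.}
The plan is to obtain $\text{SBR}$ by dividing the suboptimality upper bound of Theorem~\ref{theorem:0} evaluated on the pair $(\cD_0,\cD_1)$ by the same bound evaluated on $(\cD_0,\varnothing)$, i.e.\ the special case $N_1=0$, and then to simplify by discarding lower‑order (and logarithmic) terms. Abbreviate the coefficient of the offline‑error term by $A = \frac{2c\rmax}{(1-\gamma)^2}\sqrt{d^3\zeta_1}$ and that of the reward‑bias term by $B = \frac{4\rmax}{1-\gamma}\sqrt{d^2\zeta_2}$. Theorem~\ref{theorem:0} then gives for the numerator $\overline{\text{\rm SubOpt}}\big(\widehat{\pi}_{\text{PDS}}(\cD_0,\cD_1)\big) = A\big/\sqrt{N_0C^\dagger_0+N_1C^\dagger_1} + B\big/\sqrt{N_0C^\dagger_0}$. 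For the denominator I would note that running PDS with no unlabeled data ($\cD_1=\varnothing$) still relabels $\cD_0$ with the pessimistic reward $\tilde\theta$ (Algorithm~\ref{alg:2}, line~6), so it still incurs \emph{both} terms, now with $N_1=0$, giving $\overline{\text{\rm SubOpt}}\big(\widehat{\pi}_{\text{PDS}}(\cD_0,\varnothing)\big) \asymp A\big/\sqrt{N_0C^\dagger_0} + B\big/\sqrt{N_0C^\dagger_0}$; the only difference is that $\zeta_1$ now contains $N_0$ rather than $N_0+N_1$ inside a logarithm, which I would sweep into the ``$\approx$''.

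The key step is the split of the ratio. I would argue that whenever $d$ is moderately large or $\gamma$ is close to $1$ the offline‑error term dominates the reward‑bias term in the denominator, since $B/A \asymp \frac{1-\gamma}{c\sqrt d}\sqrt{\zeta_2/\zeta_1}$ is small; hence the denominator is, up to constants, just $A/\sqrt{N_0C^\dagger_0}$. Dividing the two numerator terms by $A/\sqrt{N_0C^\dagger_0}$ then produces exactly the two advertised pieces: the offline‑error/offline‑error ratio is $\sqrt{N_0C^\dagger_0/(N_0C^\dagger_0+N_1C^\dagger_1)}$, the finite‑sample term; and the reward‑bias/offline‑error ratio is $B/A = \frac{4\rmax\sqrt{d^2\zeta_2}/(1-\gamma)}{2c\rmax\sqrt{d^3\zeta_1}/(1-\gamma)^2} = \frac{2(1-\gamma)}{c\sqrt d}\sqrt{\zeta_2/\zeta_1}$, where $\rmax$ and $\sqrt{N_0C^\dagger_0}$ cancel, $\sqrt{d^2}/\sqrt{d^3}=1/\sqrt d$, one power of $1-\gamma$ cancels, and $4/(2c)=2/c$. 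Dropping the residual $\sqrt{\zeta_2/\zeta_1}=\Theta(1)$ collapses this to the asymptotic term $\frac{2(1-\gamma)}{c\sqrt d}$, yielding the claimed expression.

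The main obstacle is conceptual rather than computational: one must justify that $\overline{\text{\rm SubOpt}}$ really is a \emph{tight} upper bound, so that the ratio of upper bounds is a meaningful proxy for the ratio of true suboptimalities; this is where the adversarial dataset construction of Appendix~\ref{bound_discussion} enters, matching the bound of Theorem~\ref{theorem:0} up to constants for both the numerator (with $\cD_1$) and the denominator (with $\cD_1=\varnothing$). The remaining care is in controlling the logarithmic discrepancies---checking that $\zeta_1$ evaluated with $N_0+N_1$ versus $N_0$, and the ratio $\zeta_2/\zeta_1$, are all $\Theta(\mathrm{polylog})$ and hence can legitimately be absorbed into the ``$\approx$''. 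These two points are precisely what the word ``informal'' in the corollary's title is carrying; everything else is bookkeeping.
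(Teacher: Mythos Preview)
Your derivation is correct and is essentially the only way to arrive at the stated expression: take the ratio of the Theorem~\ref{theorem:0} bound with $\cD_1$ present over the same bound with $N_1=0$, then simplify. The paper gives no explicit proof of this corollary (it is labeled ``Informal'' and simply stated as a consequence of Theorem~\ref{theorem:0}), so there is nothing to compare your argument against except this implicit computation, which you carry out correctly, including the arithmetic $B/A = \tfrac{2(1-\gamma)}{c\sqrt d}\sqrt{\zeta_2/\zeta_1}$ and the absorption of logarithmic ratios into the ``$\approx$''.

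One small simplification you could make: rather than keeping the reward-bias term $B/\sqrt{N_0C_0^\dagger}$ in the denominator and then arguing it is dominated by $A/\sqrt{N_0C_0^\dagger}$, you can observe that the baseline ``policy learned using labeled data alone'' is naturally just PEVI on $\cD_0$ with the true rewards (there is no need to learn or penalize rewards when all data is labeled), so the denominator is exactly $A/\sqrt{N_0C_0^\dagger}$ without any approximation at that step. This is how the paper's surrounding discussion (``PDS improves over no data-sharing algorithms'') reads, and it removes one layer of justification. Either route gives the same final expression, and your handling of the tightness caveat via Appendix~\ref{bound_discussion} is apt.
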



\paragraph{When does unlabeled data improve the performance of offline algorithms?}
Corollary~\ref{col:1} allows us to analyze the relative performance of PDS under different conditions. The first term of the bound depends on the qualities and amounts of both labeled and unlabeled datasets. If the unlabeled dataset has a mediocre number of samples or data quality, the first term will be sufficiently small. The second term affects the asymptotic performance when the unlabeled data approaches infinity, and it depends on the discount factor and the dimension of the problem. PDS improves over no data-sharing algorithms asymptotically in larger problems or longer horizons. For a more detailed discussion, please refer to Appendix~\ref{uds:discussion}.

\subsection{Practical Implementation}
\label{sec:practice}
This subsection outlines the practical implementation of PDS in general MDPs. We employ $L$ ensembles ${\theta_1,\ldots,\theta_L}$ to estimate uncertainty, which are learned using Equation~\eqref{eq:reward_learning}. To estimate pessimistic rewards, we use the following pessimistic estimation:
\begin{equation}
\label{eq:pess_reward_1}
    \widehat{r}(s,a) = \max \left\{\mu(s,a) - k \sigma(s,a),0\right\},
\end{equation}
where $\mu(s,a)=\frac{1}{L}\sum_{i=1}^{L} f_{\theta_i}(s,a), \sigma(s,a)=\sqrt{\frac{1}{L}\sum_{i=1}^{L} (f_{\theta_i}(s,a)-\mu(s,a))^2}$
are the mean and standard deviation, respectively.
Here, $k$ is a hyperparameter used to control the amount of pessimism.
We can also use the minimum over $L$ ensembles for the pessimistic estimation, which is linked to Equation~\eqref{eq:pess_reward_1}  following \citet{an2021uncertainty,royston1982expected} as shown in Equation~\eqref{eq:pess_reward_2}:
\begin{equation}
\label{eq:pess_reward_2}
\EE{\left[\min_{j=1,\ldots,L} f_{\theta_j}(s,a)\right]} \approx \mu(s,a) - \Phi^{-1}\left(\frac{L-\frac{\pi}{8}}{L-\frac{\pi}{4}+1}\right)\sigma(s,a),
\end{equation}
where $\Phi$ is the CDF of the standard Gaussian distribution. 

The appropriate value of $k$ for each domain can be difficult to determine. To address this issue, we observe that the amount of pessimism required for different domains is proportional to the difference in mean rewards between labeled and unlabeled data. Leveraging this insight, we propose a simple and efficient automatic mechanism for adjusting the value of the $k$ parameter. Specifically, we suggest a method that adjusts $k$ based on the difference in mean rewards, as given by Equation~\eqref{eq:pess_reward_3}:
\begin{align}
    \widehat{r}(s,a) &= \max \left\{
    \min_{j=1,\ldots,L} f_{\theta_j}(s,a) - k \sigma(s,a),0\right\}, \notag\\
    &\text{where} \quad k = a\cdot \frac{\max({\mu - \hat{\mu}},0)}{{|\mu|+\epsilon}} . \label{eq:pess_reward_3}
\end{align}

where $\mu=\frac{1}{N_0}\sum_{i=1}^{N_0}\mu(s_i, a_i),\hat{\mu}=\frac{1}{N_1}\sum_{i=1}^{N_1}\hat{\mu}(s_i, a_i)$ are the mean reward of labeled and (predicted) unlabeled data, respectively. We use $a=25$ and $L=10$ in all experiments. 


Then we can plug in any model-free offline algorithms. Here we use IQL~\citep{kostrikov2021offline} as the  backbone offline algorithm, but we emphasize that it can be easily integrated with other model-free algorithms. The details of our algorithm is summarized in Algorithm~\ref{alg:3} in Appendix~\ref{sec:iql_pds}.

\section{Experiments}
\label{sec:experiments}
In this section, we aim to evaluate the effectiveness of pessimistic reward estimation and answer the following questions:
(1) How does PDS perform compared to the naive reward prediction and unlabeled data sharing~(UDS) methods in single locomotion and manipulation tasks?
(2) How does PDS behave in multi-task offline RL settings compared to baselines?
(3) What makes PDS effective?

\begin{table}[h]
    \centering
    \begin{tabular}{c|cccc|cccc}
        \toprule
         Algorithm & door-open & door-close & drawer-open & drawer-close & average \\
        \midrule
        UDS & 16.2$\pm$12.1 & 0.0$\pm$0.0 & 30.4$\pm$60.4 & \textbf{182.2$\pm$0.4} & 57.2$\pm$30.8 \\
        Rew Pred & \textbf{26.4$\pm$12.9} & 110.8$\pm$14.3 & 102.6$\pm$40.2 & \textbf{182.2$\pm$0.4} & 105.5$\pm$22.3 \\
        PDS & \textbf{25.5$\pm$15.5} & \textbf{114.3$\pm$1.8} & \textbf{153.8$\pm$0.4} & \textbf{182.8$\pm$0.4} & \textbf{119.1$\pm$18.1}
        \\
        No Share & 4.8$\pm$9.5 & 0.0$\pm$0.0 & 29.6$\pm$58.7 & 175.0$\pm$17.6 & 52.4$\pm$31.0 \\
        \midrule
        Oracle & 20.6$\pm$13.3 & 113.2$\pm$5.7 & 135.6$\pm$36.5 & 182.6$\pm$0.4 & 113.0$\pm$19.6 \\
        \bottomrule
    \end{tabular}
    \caption{Experiment results for multi-task robotic manipulation (Meta-World) experiments. Numbers are averaged across five seeds and we bold the best-performing method that does not have access to the true rewards.
    }
    \label{tab:metaworld}
\end{table}

\begin{table}[h]
    \centering
    \begin{tabular}{c|c|cccc}
    \toprule
    Env & Tasks~/~Dataset type & UDS & Rew Pred & CDS+UDS & PDS\\
    \midrule
    \multirow{4}{*}{Antmaze} & medium-play~(3 tasks)~/~directed & 15.8$\pm$1.2 & 26.2$\pm$3.7 & \textbf{40.6$\pm$4.0} & \textbf{40.0$\pm$3.6} \\
     & medium-play~(3 tasks)~/~undirected & 19.6$\pm$2.5 & 27.2$\pm$2.9 & \textbf{37.2$\pm$5.1} & 29.2$\pm$4.1 \\
     & medium-diverse~(3 tasks)~/~directed & 8.7$\pm$3.3 & 20.2$\pm$3.8 & 33.3$\pm$11.5 & \textbf{53.2$\pm$3.6}\\
     & medium-diverse~(3 tasks)~/~undirected & 9.7$\pm$1.3 & \textbf{42.2$\pm$4.3} & 40.7$\pm$3.5 & \textbf{42.5$\pm$5.2}\\
    \bottomrule
    \end{tabular}
    \caption{Experiment results for AntMaze tasks with normalized score metric averaged with five random seeds.
    }
    \label{tab:antmaze}
\end{table}

\begin{table}[t]
    \centering
    \begin{tabular}{c|c|c|ccc|c}
        \toprule
        Task & Labeled Size & Unlabeled Size & UDS & Rew Pred & PDS & Oracle \\
        \midrule
        \multirow{3}{*}{Hopper} & medium / 50K  & medium / 0.1M & 58.7$\pm$1.5 & 64.8$\pm$3.2 & \textbf{73.9$\pm$8.4} & 66.3$\pm$2.1 \\
         & medium / 50K & medium / 0.4M  & 57.3$\pm$1.4 & 68.9$\pm$4.0 & \textbf{77.8$\pm$7.4} & 67.4$\pm$4.2 \\
         & medium / 50K  & medium / 0.6M  & 56.6$\pm$1.4 & 68.5$\pm$2.1 & \textbf{75.9$\pm$2.4} & 68.2$\pm$2.1 \\
         & expert / 50K  & random / 0.1M  & \textbf{53.3$\pm$3.8} & 27.9$\pm$15.1 & 42.7$\pm$9.8 & 18.1$\pm$5.9 \\
         & random / 50K  & expert / 0.1M  & 4.3$\pm$0.4 & 84.7$\pm$10.8 & \textbf{92.3$\pm$9.8} & 40.1$\pm$4.5 \\
        \midrule
        \multirow{3}{*}{Walker2d} & medium / 50K  & medium / 0.1M  & 70.8$\pm$1.2 & 71.4$\pm$2.9 & \textbf{76.1$\pm$0.2} & 74.6$\pm$2.3 \\
         & medium / 50K  & medium / 0.4M  & 75.3$\pm$1.4  & 70.9$\pm$4.1 & \textbf{80.1$\pm$0.3} & 77.7$\pm$2.4 \\
         & medium / 50K  & medium / 0.6M  & 74.8$\pm$0.4 & \textbf{79.9$\pm$4.2} & \textbf{79.1$\pm$1.4} & 79.1$\pm$2.7 \\
         & expert / 50K  & random / 0.1M  & 25.7$\pm$13.1 & 2.7$\pm$0.2 & \textbf{39.5$\pm$10.0} & 22.6$\pm$1.2 \\
         & random / 50K  & expert / 0.1M  & 0.4$\pm$0.1 & 95.3$\pm$2.3 & \textbf{101.4$\pm$3.2} & 15.8$\pm$1.3 \\
        \bottomrule
    \end{tabular}
    \caption{Experimental results with normalized score metric averaged with five random seeds.
    }
    \label{tab:mujoco_kitchen}
\end{table}

\paragraph{Single-task domains and datasets.}
To address Question (1), we empirically evaluate the PDS algorithm on the hopper, walker2d, and kitchen tasks from the D4RL benchmark suite~\citep{fu2020d4rl}. We use 50 labeled trajectories with varying amounts of unlabeled data of different sizes and qualities. This experimental setup is motivated by real-world problems where labeled data is often scarce, and additional unlabeled data may be readily available.

\paragraph{Multi-task domains and datasets.} We investigate Question~(2) by evaluating PDS on several multi-task domains. The first set of domains we consider is Meta-World~\citep{yu2020meta}, where we adopt the same setup as in CDS~\citep{yu2021conservative} and evaluate PDS on four tasks: \verb|door open|, \verb|door close|, \verb|drawer open|, and \verb|drawer close|. The second domain is the AntMaze task in D4RL, which consists of mazes of two sizes (medium and large) and includes 3 and 7 tasks, respectively, corresponding to different goal positions. For a detailed description of the experimental setting, please refer to Appendix~\ref{sec:exp_setting}.



\paragraph{Comparisons.}
To ensure a fair comparison, we combine UDS with IQL \citep{kostrikov2021offline}, the same underlying offline RL method as PDS. In addition to UDS, we train a naive reward prediction method and the sharing-all-true-rewards method (Oracle), and adapt them with IQL. In all experiments, we set the hyperparameters $a=25$ and $L=10$ for our method.

\subsection{Experimental Results}

\paragraph{Results of Question~(1).}
We evaluated each method on the hopper, walker2d, and kitchen domains and found that PDS outperformed the other methods on most tasks and achieved competitive or better performance than the oracle method (Table~\ref{tab:mujoco_kitchen}). Notably, PDS performed well when the labeled and unlabeled datasets had different data qualities, which we attribute to its ability to capture the uncertainties induced by this distribution shift and maintain a pessimistic algorithm. The prediction method performed well when the unlabeled dataset had high quality, and UDS performed well when the unlabeled data had low quality. PDS combined the strengths of both methods and achieved superior performance.

\paragraph{Results of Question~(2).}
Multi-task settings exhibit greater distributional shifts between labeled and unlabeled data due to the differing task goals. We evaluated PDS and the other methods on the AntMaze and Meta-World domains (Tables~\ref{tab:antmaze} and~\ref{tab:metaworld}) and found that PDS's performance was comparable to the oracle method and outperformed the other methods. UDS performed relatively poorly on the Meta-World dataset, possibly due to the high dataset quality, which made labeling with zeros induce a large reward bias. On the multi-task AntMaze domain, PDS outperformed both UDS and the naive prediction method, especially on the diverse dataset. These results aligned with our observation on single-task domains that PDS performs better when the distribution shift between datasets is larger.

\paragraph{Results of Question~(3).}
We conducted experiments on the hopper and walker2d tasks with various penalty weights $k$ to investigate the effect of uncertainty weights in PDS (Figure~\ref{fig:ablation}). The results shows that PDS can interpolate between UDS and the reward prediction method and offered a better trade-off to balance the conservation and generalization of reward estimators, resulting in better performance. Also, PDS reduces the variance from reward prediction and is close to the oracle method, indicating its ability to reduce the uncertainties from the variance of reward predictors while keeping the reward bias small, as shown in Figure~\ref{fig:value_variance}.

\textbf{Discussion of PDS, UDS, and Reward Prediction:} PDS can be seen as a generalization of both UDS and the reward prediction method. UDS sets the penalty weight $k$ to infinity, while the reward prediction method sets it to zero. However, UDS introduces a high reward bias, and the reward prediction method ruins the pessimism of offline algorithms. In contrast, PDS offers a trade-off between bias and pessimism by adaptively adjusting $k$. To verify that overestimation is the main factor for the suboptimal performance of the reward prediction method, we conduct additional ablation studies, as shown in Appendix~\ref{sec:baseline}.




\begin{figure}[t]
   \centering
   \subfigure{\includegraphics[scale=0.22]{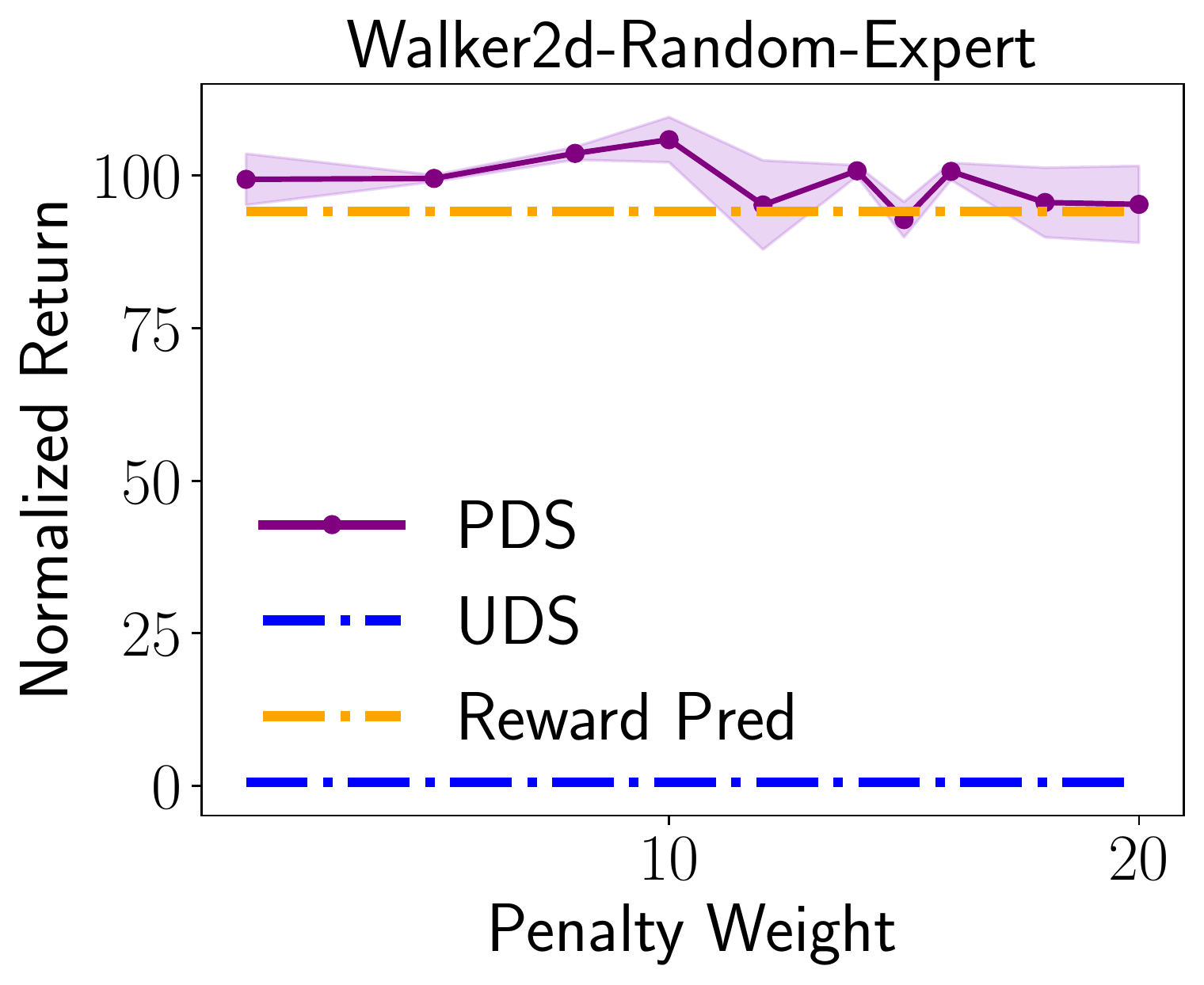}}
   \subfigure{\includegraphics[scale=0.22]{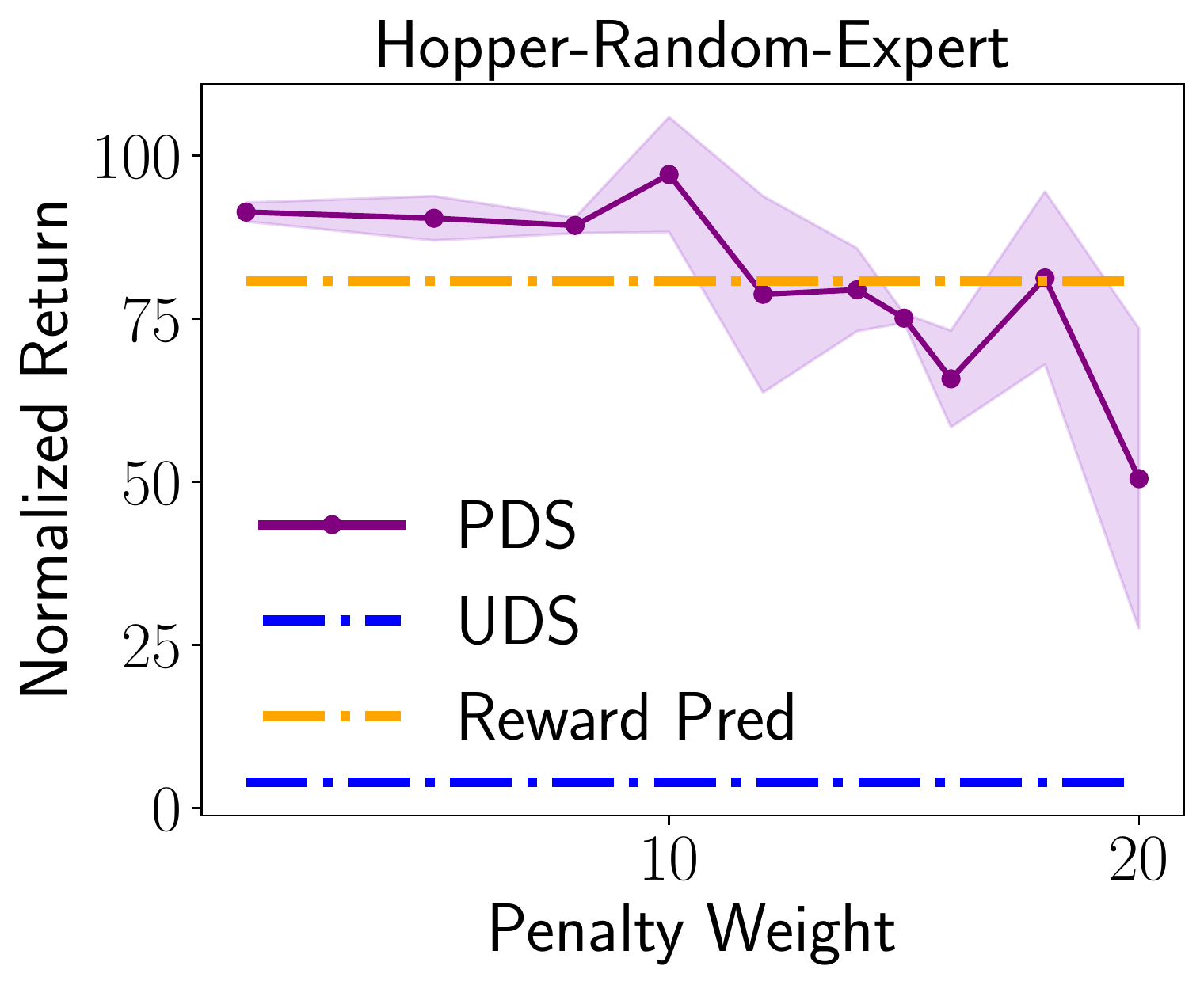}}
   \subfigure{\includegraphics[scale=0.22]{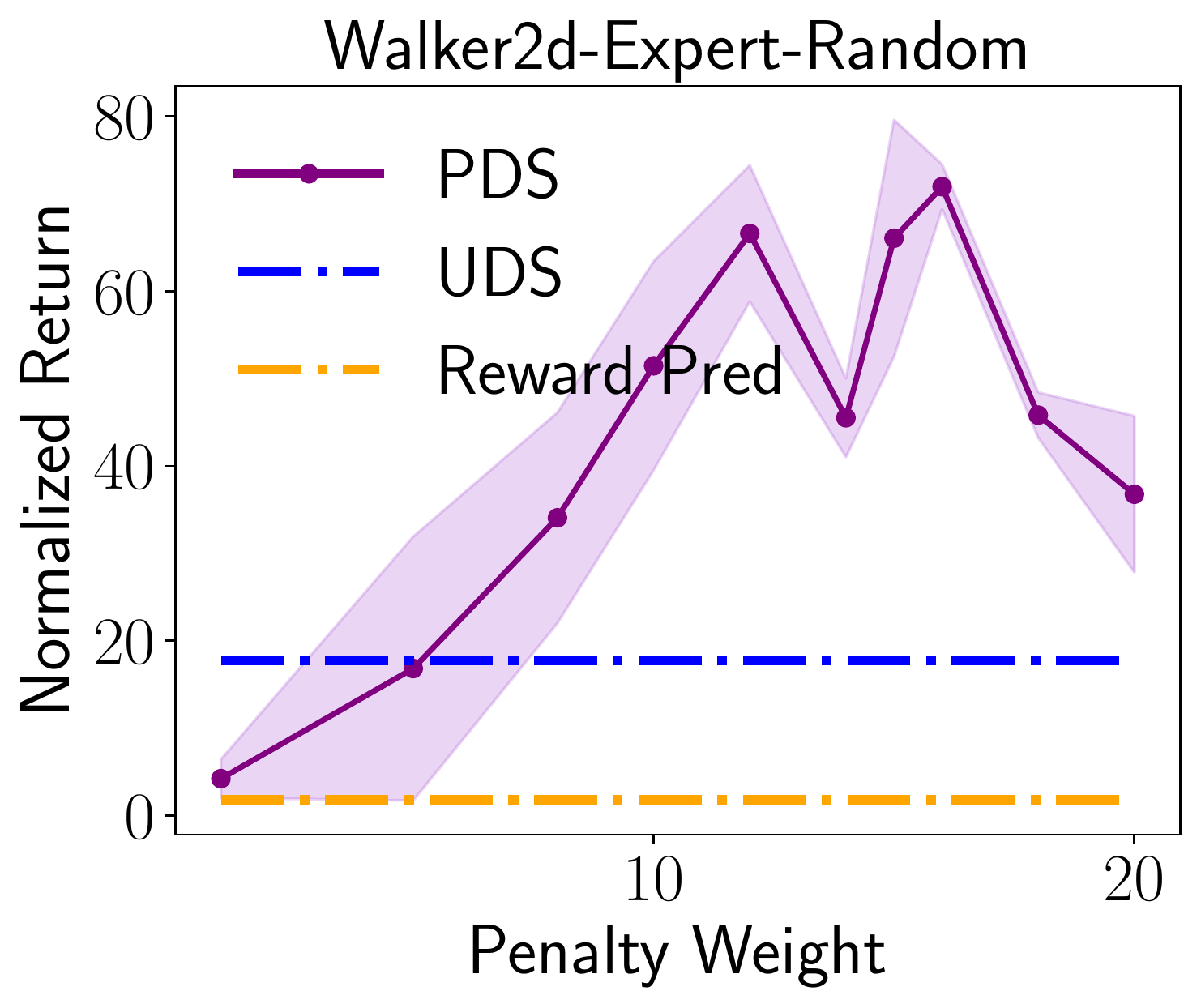}}
   \subfigure{\includegraphics[scale=0.22]{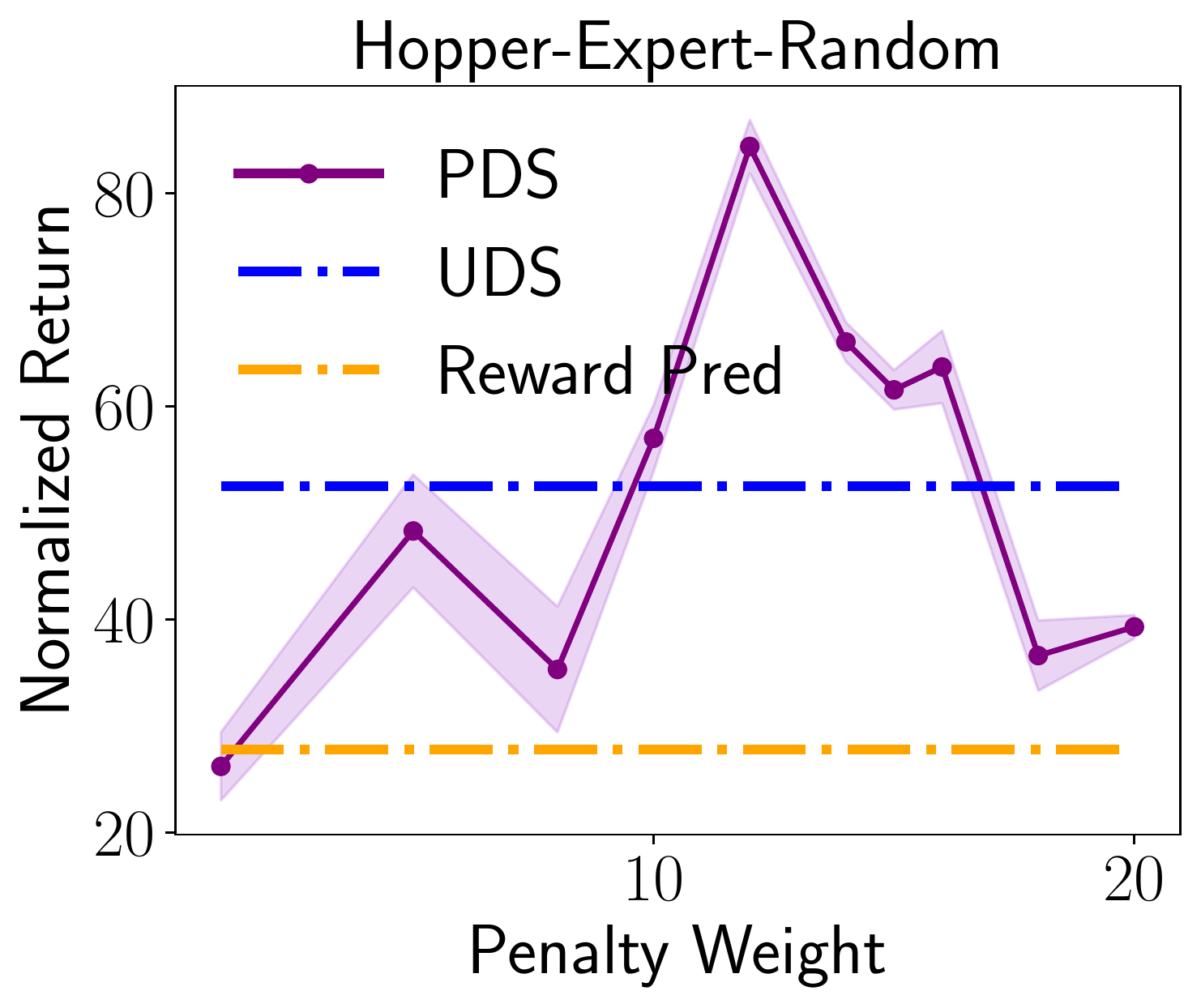}}
   \caption{Impact of penalty weight $k$ on the performance.
   We evaluate PDS on Hopper/Walker2d-Labeled~(Expert/Random)-Unlabeled~(Expert/Random) tasks with various $k$.}
   \label{fig:ablation}
\end{figure}

\begin{figure}[t]
    \centering
    \subfigure{\includegraphics[scale=0.35]{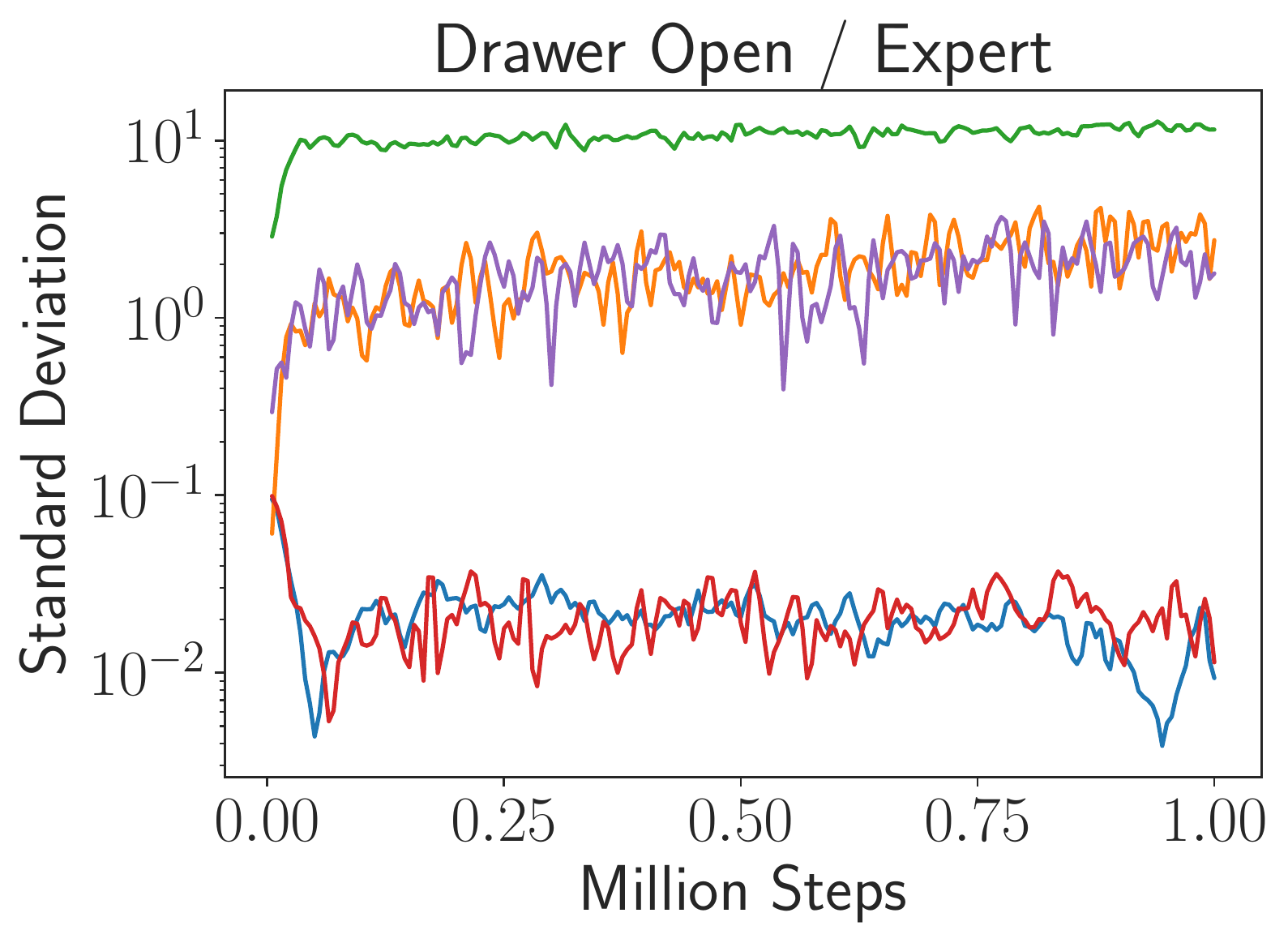}}
    \hspace{1mm}
    \subfigure{\includegraphics[scale=0.35]{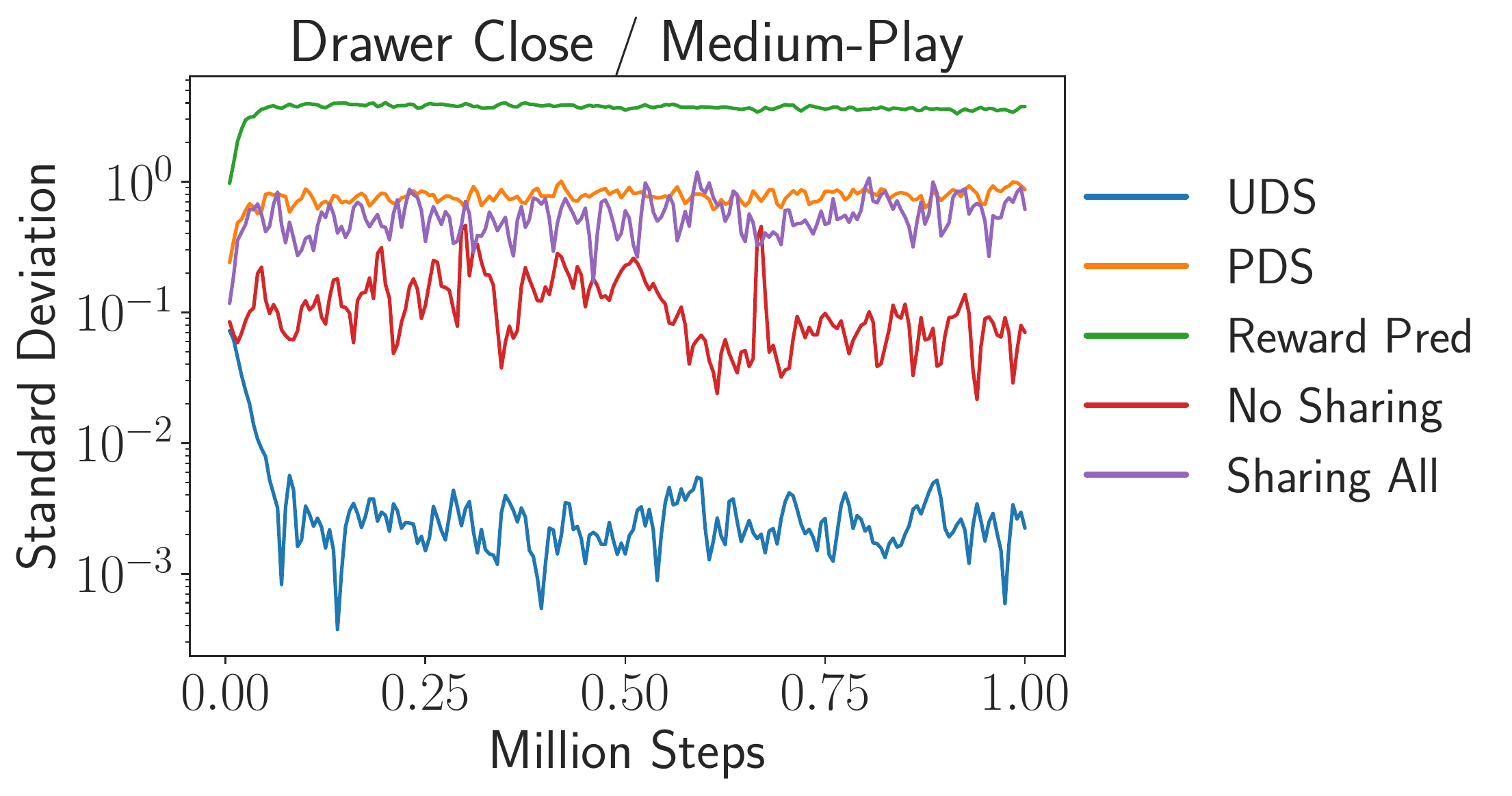}}
    \caption{The standard deviation of the state values learned with IQL. 
    The reward prediction method suffers from a large deviation. The oracle method's and PDS's deviation is mediocre, and no sharing and UDS methods enjoy the smallest deviation.}
    \label{fig:value_variance}
\end{figure}

\section{Conclusion}

In this paper, we show that incorporating reward-free data into offline reinforcement learning can yield significant performance improvements. Our theoretical analysis reveals that unlabeled data provides additional information about the MDP's dynamics, reducing the problem to linear bandits in the limit and improving performance bounds therefore. Building upon these insights, we propose a new algorithm, PDS, that leverages this information by incorporating uncertainty penalties on learned rewards to ensure a conservative approach. Our method has provable guarantees in theory and achieves superior performance in practice.
In future work, it may be interesting to explore how PDS can be further improved with representation learning methods, and to extend our analysis to more general settings, such as generalized linear MDPs~\citep{wang2019optimism} and low-rank MDPs~\citep{ayoub2020model,jiang2017contextual}.

\section{Acknowledgements}
This work is supported in part by Science and Technology Innovation 2030 - ``New Generation Artificial Intelligence'' Major Project (No. 2018AAA0100904) and the National Natural Science Foundation of China (62176135).

\pagebreak

\bibliography{iclr2023_conference}

\begin{thebibliography}{53}
\providecommand{\natexlab}[1]{#1}
\providecommand{\url}[1]{\texttt{#1}}
\expandafter\ifx\csname urlstyle\endcsname\relax
  \providecommand{\doi}[1]{doi: #1}\else
  \providecommand{\doi}{doi: \begingroup \urlstyle{rm}\Url}\fi

\bibitem[Abbasi-Yadkori et~al.(2011)Abbasi-Yadkori, P{\'a}l, and
  Szepesv{\'a}ri]{abbasi2011improved}
Yasin Abbasi-Yadkori, D{\'a}vid P{\'a}l, and Csaba Szepesv{\'a}ri.
\newblock Improved algorithms for linear stochastic bandits.
\newblock \emph{Advances in neural information processing systems},
  24:\penalty0 2312--2320, 2011.

\bibitem[Abel et~al.(2021)Abel, Dabney, Harutyunyan, Ho, Littman, Precup, and
  Singh]{abel2021expressivity}
David Abel, Will Dabney, Anna Harutyunyan, Mark~K Ho, Michael Littman, Doina
  Precup, and Satinder Singh.
\newblock On the expressivity of markov reward.
\newblock \emph{Advances in Neural Information Processing Systems},
  34:\penalty0 7799--7812, 2021.

\bibitem[An et~al.(2021)An, Moon, Kim, and Song]{an2021uncertainty}
Gaon An, Seungyong Moon, Jang-Hyun Kim, and Hyun~Oh Song.
\newblock Uncertainty-based offline reinforcement learning with diversified
  q-ensemble.
\newblock \emph{Advances in Neural Information Processing Systems}, 34, 2021.

\bibitem[Ayoub et~al.(2020)Ayoub, Jia, Szepesvari, Wang, and
  Yang]{ayoub2020model}
Alex Ayoub, Zeyu Jia, Csaba Szepesvari, Mengdi Wang, and Lin Yang.
\newblock Model-based reinforcement learning with value-targeted regression.
\newblock In \emph{International Conference on Machine Learning}, pp.\
  463--474. PMLR, 2020.

\bibitem[Brown et~al.(2020)Brown, Mann, Ryder, Subbiah, Kaplan, Dhariwal,
  Neelakantan, Shyam, Sastry, Askell, et~al.]{brown2020language}
Tom Brown, Benjamin Mann, Nick Ryder, Melanie Subbiah, Jared~D Kaplan, Prafulla
  Dhariwal, Arvind Neelakantan, Pranav Shyam, Girish Sastry, Amanda Askell,
  et~al.
\newblock Language models are few-shot learners.
\newblock \emph{Advances in neural information processing systems},
  33:\penalty0 1877--1901, 2020.

\bibitem[Chen et~al.(2020)Chen, Kornblith, Norouzi, and Hinton]{chen2020simple}
Ting Chen, Simon Kornblith, Mohammad Norouzi, and Geoffrey Hinton.
\newblock A simple framework for contrastive learning of visual
  representations.
\newblock In \emph{International conference on machine learning}, pp.\
  1597--1607. PMLR, 2020.

\bibitem[Devlin et~al.(2018)Devlin, Chang, Lee, and Toutanova]{devlin2018bert}
Jacob Devlin, Ming-Wei Chang, Kenton Lee, and Kristina Toutanova.
\newblock Bert: Pre-training of deep bidirectional transformers for language
  understanding.
\newblock \emph{arXiv preprint arXiv:1810.04805}, 2018.

\bibitem[Fu et~al.(2017)Fu, Luo, and Levine]{fu2017learning}
Justin Fu, Katie Luo, and Sergey Levine.
\newblock Learning robust rewards with adversarial inverse reinforcement
  learning.
\newblock \emph{arXiv preprint arXiv:1710.11248}, 2017.

\bibitem[Fu et~al.(2020)Fu, Kumar, Nachum, Tucker, and Levine]{fu2020d4rl}
Justin Fu, Aviral Kumar, Ofir Nachum, George Tucker, and Sergey Levine.
\newblock D4rl: Datasets for deep data-driven reinforcement learning.
\newblock \emph{arXiv preprint arXiv:2004.07219}, 2020.

\bibitem[Fujimoto \& Gu(2021)Fujimoto and Gu]{fujimoto2021td3}
Scott Fujimoto and Shixiang~Shane Gu.
\newblock A minimalist approach to offline reinforcement learning.
\newblock \emph{arXiv preprint arXiv:2106.06860}, 2021.

\bibitem[Fujimoto et~al.(2019)Fujimoto, Meger, and Precup]{fujimoto2019off}
Scott Fujimoto, David Meger, and Doina Precup.
\newblock Off-policy deep reinforcement learning without exploration.
\newblock In \emph{International Conference on Machine Learning}, pp.\
  2052--2062. PMLR, 2019.

\bibitem[Guo et~al.(2016)Guo, Singh, Lewis, and Lee]{guo2016deep}
Xiaoxiao Guo, Satinder Singh, Richard Lewis, and Honglak Lee.
\newblock Deep learning for reward design to improve monte carlo tree search in
  atari games.
\newblock \emph{arXiv preprint arXiv:1604.07095}, 2016.

\bibitem[Gupta et~al.(2019)Gupta, Kumar, Lynch, Levine, and
  Hausman]{gupta2019relay}
Abhishek Gupta, Vikash Kumar, Corey Lynch, Sergey Levine, and Karol Hausman.
\newblock Relay policy learning: Solving long-horizon tasks via imitation and
  reinforcement learning.
\newblock \emph{arXiv preprint arXiv:1910.11956}, 2019.

\bibitem[Ho \& Ermon(2016)Ho and Ermon]{ho2016generative}
Jonathan Ho and Stefano Ermon.
\newblock Generative adversarial imitation learning.
\newblock \emph{Advances in neural information processing systems}, 29, 2016.

\bibitem[Hu et~al.(2022)Hu, Yang, Zhao, and Zhang]{hu2022role}
Hao Hu, Yiqin Yang, Qianchuan Zhao, and Chongjie Zhang.
\newblock On the role of discount factor in offline reinforcement learning.
\newblock \emph{arXiv preprint arXiv:2206.03383}, 2022.

\bibitem[Jiang et~al.(2017)Jiang, Krishnamurthy, Agarwal, Langford, and
  Schapire]{jiang2017contextual}
Nan Jiang, Akshay Krishnamurthy, Alekh Agarwal, John Langford, and Robert~E
  Schapire.
\newblock Contextual decision processes with low bellman rank are
  pac-learnable.
\newblock In \emph{International Conference on Machine Learning}, pp.\
  1704--1713. PMLR, 2017.

\bibitem[Jin et~al.(2020)Jin, Yang, Wang, and Jordan]{jin2020provably}
Chi Jin, Zhuoran Yang, Zhaoran Wang, and Michael~I Jordan.
\newblock Provably efficient reinforcement learning with linear function
  approximation.
\newblock In \emph{Conference on Learning Theory}, pp.\  2137--2143. PMLR,
  2020.

\bibitem[Jin et~al.(2021)Jin, Yang, and Wang]{jin2021pessimism}
Ying Jin, Zhuoran Yang, and Zhaoran Wang.
\newblock Is pessimism provably efficient for offline rl?
\newblock In \emph{International Conference on Machine Learning}, pp.\
  5084--5096. PMLR, 2021.

\bibitem[Kalashnikov et~al.(2021)Kalashnikov, Varley, Chebotar, Swanson,
  Jonschkowski, Finn, Levine, and Hausman]{kalashnikov2021mt}
Dmitry Kalashnikov, Jacob Varley, Yevgen Chebotar, Benjamin Swanson, Rico
  Jonschkowski, Chelsea Finn, Sergey Levine, and Karol Hausman.
\newblock Mt-opt: Continuous multi-task robotic reinforcement learning at
  scale.
\newblock \emph{arXiv preprint arXiv:2104.08212}, 2021.

\bibitem[Kidambi et~al.(2020)Kidambi, Rajeswaran, Netrapalli, and
  Joachims]{kidambi2020morel}
Rahul Kidambi, Aravind Rajeswaran, Praneeth Netrapalli, and Thorsten Joachims.
\newblock Morel: Model-based offline reinforcement learning.
\newblock \emph{arXiv preprint arXiv:2005.05951}, 2020.

\bibitem[Kostrikov et~al.(2019)Kostrikov, Nachum, and
  Tompson]{kostrikov2019imitation}
Ilya Kostrikov, Ofir Nachum, and Jonathan Tompson.
\newblock Imitation learning via off-policy distribution matching.
\newblock \emph{arXiv preprint arXiv:1912.05032}, 2019.

\bibitem[Kostrikov et~al.(2021)Kostrikov, Nair, and
  Levine]{kostrikov2021offline}
Ilya Kostrikov, Ashvin Nair, and Sergey Levine.
\newblock Offline reinforcement learning with implicit q-learning.
\newblock \emph{arXiv preprint arXiv:2110.06169}, 2021.

\bibitem[Kumar et~al.(2020)Kumar, Zhou, Tucker, and
  Levine]{kumar2020conservative}
Aviral Kumar, Aurick Zhou, George Tucker, and Sergey Levine.
\newblock Conservative q-learning for offline reinforcement learning.
\newblock \emph{arXiv preprint arXiv:2006.04779}, 2020.

\bibitem[Levine et~al.(2020)Levine, Kumar, Tucker, and Fu]{levine2020offline}
Sergey Levine, Aviral Kumar, George Tucker, and Justin Fu.
\newblock Offline reinforcement learning: Tutorial, review, and perspectives on
  open problems.
\newblock \emph{arXiv preprint arXiv:2005.01643}, 2020.

\bibitem[Li et~al.(2020)Li, Pinto, and Abbeel]{li2020generalized}
Alexander Li, Lerrel Pinto, and Pieter Abbeel.
\newblock Generalized hindsight for reinforcement learning.
\newblock \emph{Advances in neural information processing systems},
  33:\penalty0 7754--7767, 2020.

\bibitem[Ma et~al.(2021)Ma, Yang, Hu, Liu, Yang, Zhang, Zhao, and
  Liang]{ma2021offline}
Xiaoteng Ma, Yiqin Yang, Hao Hu, Qihan Liu, Jun Yang, Chongjie Zhang, Qianchuan
  Zhao, and Bin Liang.
\newblock Offline reinforcement learning with value-based episodic memory.
\newblock \emph{arXiv preprint arXiv:2110.09796}, 2021.

\bibitem[Mataric(1994)]{mataric1994reward}
Maja~J Mataric.
\newblock Reward functions for accelerated learning.
\newblock In \emph{Machine learning proceedings 1994}, pp.\  181--189.
  Elsevier, 1994.

\bibitem[Ng et~al.(1999)Ng, Harada, and Russell]{ng1999policy}
Andrew~Y Ng, Daishi Harada, and Stuart Russell.
\newblock Policy invariance under reward transformations: Theory and
  application to reward shaping.
\newblock In \emph{Icml}, volume~99, pp.\  278--287, 1999.

\bibitem[Ng et~al.(2000)Ng, Russell, et~al.]{ng2000algorithms}
Andrew~Y Ng, Stuart Russell, et~al.
\newblock Algorithms for inverse reinforcement learning.
\newblock In \emph{Icml}, volume~1, pp.\ ~2, 2000.

\bibitem[Rashidinejad et~al.(2021)Rashidinejad, Zhu, Ma, Jiao, and
  Russell]{rashidinejad2021bridging}
Paria Rashidinejad, Banghua Zhu, Cong Ma, Jiantao Jiao, and Stuart Russell.
\newblock Bridging offline reinforcement learning and imitation learning: A
  tale of pessimism.
\newblock \emph{arXiv preprint arXiv:2103.12021}, 2021.

\bibitem[Reddy et~al.(2019)Reddy, Dragan, and Levine]{reddy2019sqil}
Siddharth Reddy, Anca~D Dragan, and Sergey Levine.
\newblock Sqil: Imitation learning via reinforcement learning with sparse
  rewards.
\newblock \emph{arXiv preprint arXiv:1905.11108}, 2019.

\bibitem[Royston(1982)]{royston1982expected}
JP~Royston.
\newblock Expected normal order statistics (exact and approximate): algorithm
  as 177.
\newblock \emph{Applied Statistics}, 31\penalty0 (2):\penalty0 161--5, 1982.

\bibitem[Shalev-Shwartz et~al.(2016)Shalev-Shwartz, Shammah, and
  Shashua]{shalev2016safe}
Shai Shalev-Shwartz, Shaked Shammah, and Amnon Shashua.
\newblock Safe, multi-agent, reinforcement learning for autonomous driving.
\newblock \emph{arXiv preprint arXiv:1610.03295}, 2016.

\bibitem[Siegel et~al.(2020)Siegel, Springenberg, Berkenkamp, Abdolmaleki,
  Neunert, Lampe, Hafner, Heess, and Riedmiller]{siegel2020keep}
Noah~Y Siegel, Jost~Tobias Springenberg, Felix Berkenkamp, Abbas Abdolmaleki,
  Michael Neunert, Thomas Lampe, Roland Hafner, Nicolas Heess, and Martin
  Riedmiller.
\newblock Keep doing what worked: Behavioral modelling priors for offline
  reinforcement learning.
\newblock \emph{arXiv preprint arXiv:2002.08396}, 2020.

\bibitem[Singh et~al.(2019)Singh, Yang, Finn, and
  Levine]{DBLP:conf/rss/SinghYFL19}
Avi Singh, Larry Yang, Chelsea Finn, and Sergey Levine.
\newblock End-to-end robotic reinforcement learning without reward engineering.
\newblock In Antonio Bicchi, Hadas Kress{-}Gazit, and Seth Hutchinson (eds.),
  \emph{Robotics: Science and Systems XV, University of Freiburg, Freiburg im
  Breisgau, Germany, June 22-26, 2019}, 2019.
\newblock \doi{10.15607/RSS.2019.XV.073}.

\bibitem[Singh et~al.(2020)Singh, Yu, Yang, Zhang, Kumar, and
  Levine]{singh2020cog}
Avi Singh, Albert Yu, Jonathan Yang, Jesse Zhang, Aviral Kumar, and Sergey
  Levine.
\newblock Cog: Connecting new skills to past experience with offline
  reinforcement learning.
\newblock \emph{arXiv preprint arXiv:2010.14500}, 2020.

\bibitem[Song et~al.(2019)Song, Weng, Su, Yan, Zou, and Zhu]{song2019playing}
Shihong Song, Jiayi Weng, Hang Su, Dong Yan, Haosheng Zou, and Jun Zhu.
\newblock Playing fps games with environment-aware hierarchical reinforcement
  learning.
\newblock In \emph{IJCAI}, pp.\  3475--3482, 2019.

\bibitem[Swaminathan \& Joachims(2015)Swaminathan and
  Joachims]{swaminathan2015batch}
Adith Swaminathan and Thorsten Joachims.
\newblock Batch learning from logged bandit feedback through counterfactual
  risk minimization.
\newblock \emph{The Journal of Machine Learning Research}, 16\penalty0
  (1):\penalty0 1731--1755, 2015.

\bibitem[Todorov et~al.(2012)Todorov, Erez, and Tassa]{todorov2012mujoco}
Emanuel Todorov, Tom Erez, and Yuval Tassa.
\newblock Mujoco: A physics engine for model-based control.
\newblock In \emph{2012 IEEE/RSJ international conference on intelligent robots
  and systems}, pp.\  5026--5033. IEEE, 2012.

\bibitem[Uehara \& Sun(2021)Uehara and Sun]{uehara2021pessimistic}
Masatoshi Uehara and Wen Sun.
\newblock Pessimistic model-based offline reinforcement learning under partial
  coverage.
\newblock \emph{arXiv preprint arXiv:2107.06226}, 2021.

\bibitem[Wang et~al.(2021)Wang, Li, Jiang, Zhu, Li, and Zhang]{wang2021offline}
Jianhao Wang, Wenzhe Li, Haozhe Jiang, Guangxiang Zhu, Siyuan Li, and Chongjie
  Zhang.
\newblock Offline reinforcement learning with reverse model-based imagination.
\newblock \emph{Advances in Neural Information Processing Systems},
  34:\penalty0 29420--29432, 2021.

\bibitem[Wang et~al.(2019)Wang, Wang, Du, and Krishnamurthy]{wang2019optimism}
Yining Wang, Ruosong Wang, Simon~S Du, and Akshay Krishnamurthy.
\newblock Optimism in reinforcement learning with generalized linear function
  approximation.
\newblock \emph{arXiv preprint arXiv:1912.04136}, 2019.

\bibitem[Wirth et~al.(2017)Wirth, Akrour, Neumann, F{\"u}rnkranz,
  et~al.]{wirth2017survey}
Christian Wirth, Riad Akrour, Gerhard Neumann, Johannes F{\"u}rnkranz, et~al.
\newblock A survey of preference-based reinforcement learning methods.
\newblock \emph{Journal of Machine Learning Research}, 18\penalty0
  (136):\penalty0 1--46, 2017.

\bibitem[Wu et~al.(2021)Wu, Zhai, Srivastava, Susskind, Zhang, Salakhutdinov,
  and Goh]{wu2021uncertainty}
Yue Wu, Shuangfei Zhai, Nitish Srivastava, Joshua Susskind, Jian Zhang, Ruslan
  Salakhutdinov, and Hanlin Goh.
\newblock Uncertainty weighted actor-critic for offline reinforcement learning.
\newblock \emph{arXiv preprint arXiv:2105.08140}, 2021.

\bibitem[Wu \& Tian(2017)Wu and Tian]{wu2016training}
Yuxin Wu and Yuandong Tian.
\newblock Training agent for first-person shooter game with actor-critic
  curriculum learning.
\newblock In \emph{5th International Conference on Learning Representations,
  {ICLR} 2017}, 2017.

\bibitem[Yang \& Wang(2019)Yang and Wang]{yang2019sample}
Lin Yang and Mengdi Wang.
\newblock Sample-optimal parametric q-learning using linearly additive
  features.
\newblock In \emph{International Conference on Machine Learning}, pp.\
  6995--7004. PMLR, 2019.

\bibitem[Yang et~al.(2021)Yang, Ma, Li, Zheng, Zhang, Huang, Yang, and
  Zhao]{yang2021believe}
Yiqin Yang, Xiaoteng Ma, Chenghao Li, Zewu Zheng, Qiyuan Zhang, Gao Huang, Jun
  Yang, and Qianchuan Zhao.
\newblock Believe what you see: Implicit constraint approach for offline
  multi-agent reinforcement learning.
\newblock \emph{arXiv preprint arXiv:2106.03400}, 2021.

\bibitem[Yang et~al.(2022)Yang, Hu, Li, Li, Yang, Zhao, and
  Zhang]{yang2022flow}
Yiqin Yang, Hao Hu, Wenzhe Li, Siyuan Li, Jun Yang, Qianchuan Zhao, and
  Chongjie Zhang.
\newblock Flow to control: Offline reinforcement learning with lossless
  primitive discovery.
\newblock \emph{arXiv preprint arXiv:2212.01105}, 2022.

\bibitem[Yu et~al.(2020{\natexlab{a}})Yu, Quillen, He, Julian, Hausman, Finn,
  and Levine]{yu2020meta}
Tianhe Yu, Deirdre Quillen, Zhanpeng He, Ryan Julian, Karol Hausman, Chelsea
  Finn, and Sergey Levine.
\newblock Meta-world: A benchmark and evaluation for multi-task and meta
  reinforcement learning.
\newblock In \emph{Conference on robot learning}, pp.\  1094--1100. PMLR,
  2020{\natexlab{a}}.

\bibitem[Yu et~al.(2020{\natexlab{b}})Yu, Thomas, Yu, Ermon, Zou, Levine, Finn,
  and Ma]{yu2020mopo}
Tianhe Yu, Garrett Thomas, Lantao Yu, Stefano Ermon, James Zou, Sergey Levine,
  Chelsea Finn, and Tengyu Ma.
\newblock Mopo: Model-based offline policy optimization.
\newblock \emph{arXiv preprint arXiv:2005.13239}, 2020{\natexlab{b}}.

\bibitem[Yu et~al.(2021{\natexlab{a}})Yu, Kumar, Chebotar, Hausman, Levine, and
  Finn]{yu2021conservative}
Tianhe Yu, Aviral Kumar, Yevgen Chebotar, Karol Hausman, Sergey Levine, and
  Chelsea Finn.
\newblock Conservative data sharing for multi-task offline reinforcement
  learning.
\newblock \emph{Advances in Neural Information Processing Systems},
  34:\penalty0 11501--11516, 2021{\natexlab{a}}.

\bibitem[Yu et~al.(2021{\natexlab{b}})Yu, Kumar, Rafailov, Rajeswaran, Levine,
  and Finn]{yu2021combo}
Tianhe Yu, Aviral Kumar, Rafael Rafailov, Aravind Rajeswaran, Sergey Levine,
  and Chelsea Finn.
\newblock Combo: Conservative offline model-based policy optimization.
\newblock \emph{arXiv preprint arXiv:2102.08363}, 2021{\natexlab{b}}.

\bibitem[Yu et~al.(2022)Yu, Kumar, Chebotar, Hausman, Finn, and
  Levine]{yu2022leverage}
Tianhe Yu, Aviral Kumar, Yevgen Chebotar, Karol Hausman, Chelsea Finn, and
  Sergey Levine.
\newblock How to leverage unlabeled data in offline reinforcement learning.
\newblock \emph{arXiv preprint arXiv:2202.01741}, 2022.

\end{thebibliography}
\bibliographystyle{iclr2023_conference}
\pagebreak
\appendix

\section{Algorithm Details}
\subsection{Pessimistic Value Iteration  (PEVI,\citep{jin2021pessimism})}
In this section, we describe the details of PEVI algorithm.
\label{algo_describ}

In linear MDPs, we can construct $\hat\BB \hat{V}$ and $\Gamma$ based on $\cD$ as follows, where $\hat\BB \hat{V}$ is the empirical estimation for $\BB \hat{V}$. For a given dataset $\cD=\{(s_\tau,a_\tau,r_\tau)\}_{\tau=1}^{N}$, we define the empirical mean squared Bellman error (MSBE) as
\begin{equation*}
M(w) = \sum_{\tau=1}^N \bigl(r_\tau + \gamma \widehat{V}(s_{\tau+1}) - \phi (s_\tau,a_\tau)^\top w\bigr)^2 + \lambda \norm{w}_2^2
\end{equation*}
Here $\lambda>0$ is the regularization parameter. Note that $\hat{w}$ has the closed form
\#\label{eq:w18}
&\hat{w} =  \Lambda ^{-1} \Big( \sum_{\tau=1}^{N} \phi(s_\tau,a_\tau) \cdot \bigl(r_\tau + \gamma\hat{V}(s_{\tau+1})\bigr) \Bigr ) , \notag\\
&\text{where~~} \Lambda = \lambda I+\sum_{\tau=1}^N \phi(s_\tau,a_\tau)  \phi(s_\tau,a_\tau) ^\top. 
\#
Then we simply let 
\#
\label{eq:empirical_bellman}
\hat\BB\hat{V}=\langle\phi,\hat w  \rangle .
\#
Meanwhile, we construct $\Gamma$ based on $\cD$ as 
\#\label{eq:linear_uncertainty_quantifier}
\Gamma(s, a) = \beta\cdot \big( \phi(s, a)^\top  \Lambda ^{-1} \phi(s, a)  \big)^{1/2}.
\#
Here $\beta>0$ is the scaling parameter. The overall PEVI algorithm is summarized in Algorithm~\ref{alg:1}.

\begin{algorithm}[H]
     \caption{Pessimistic Value Iteration, PEVI}\label{alg:1}
     \begin{algorithmic}[1]
     \STATE {\bf Require}: Dataset $\cD=\{(s_\tau,a_\tau,r_\tau,s_{\tau+1})\}_{\tau=1}^{T}$.
     \STATE Initialization: Set $\hat{V}(\cdot) \leftarrow 0$ and construct $\Gamma(\cdot, \cdot)$.
     \WHILE{not converged}
     \STATE Construct $(\hat\BB \hat{V})(\cdot,\cdot)$
     \STATE Set $\hat{Q}(\cdot,\cdot) \leftarrow  (\hat\BB \hat{V})(\cdot,\cdot)- \Gamma(\cdot,\cdot)$.
     \STATE Set $\hat{\pi} (\cdot \given \cdot) \leftarrow \argmax_{\pi}\EE_{\pi}{\left[\hat{Q}(\cdot, \cdot)\right]}$.
     \STATE Set $\hat{V}(\cdot) \leftarrow  \EE_{\hat{\pi}}{\left[\hat{Q}(\cdot, \cdot)\right]}$. \label{alg:general_Vhat}
     \ENDWHILE
     \STATE \textbf{Return} $\hat{V},\hat\pi$%
     \end{algorithmic}
\end{algorithm}

\subsection{IQL with provable data sharing}
In this section, we give a detailed description of our IQL+PDS algorithm.
\label{sec:iql_pds}
\begin{algorithm}[H]
    \caption{IQL+PDS algorithm, General MDPs}
    \label{alg:3}
    \textbf{Input}: Labeled dataset $\mathcal{D}_0$, unlabeled dataset $\mathcal{D}_1$.\\
    \textbf{Input}: Parameter $\alpha$, $\beta$, $k$, $\tau$.\\
    \textbf{Output}: policy $\pi_{\phi}$.
    \begin{algorithmic}[1]
    \STATE Learn $L$ reward functions as in Equation~\eqref{eq:reward_learning}.
    \STATE Construct pessimistic reward estimation as in Equation~\eqref{eq:pess_reward_3}.
    \STATE Relabel unsupervised dataset $\mathcal{D}_1$ and combine with the labeled dataset $\mathcal{D}_0$.
    \STATE Initialize $\psi$, $\theta$, $\hat{\theta}$, $\phi$.
    \FOR{each gradient step} 
    \STATE $\phi \leftarrow \psi-\lambda_V \nabla_\psi L_V(\psi) , \quad L_V(\psi)=\EE_{s,a} \left[L_2^\tau(Q_{\hat{\theta}}(s,a)-V_\psi(s))\right]$
    \STATE $\theta \leftarrow \theta-\lambda_Q \nabla_\theta L_Q(\theta), \quad L_Q(\theta)=\EE_{s,a,r} \left[(r+\gamma Q_{\hat{\theta}}(s,a)-Q_{{\theta}}(s,a))^2\right]$
    \STATE $\hat{\theta} \leftarrow \alpha \theta + (1-\alpha)\theta$
    \ENDFOR
    \FOR{each gradient step} 
    \STATE $\phi \leftarrow \phi - \lambda_\pi \nabla_\phi L_\pi(\phi),\quad L_\pi(\phi)=\EE_{s,a}\left[ \exp{\left(\beta(Q_{\hat{\theta}}(s,a)-V_\psi(s))\right)}\log(\pi_\phi(a|s))\right]$
    \ENDFOR \\
    
\end{algorithmic}
\end{algorithm}

\section{Proof of Theorem~\ref{theorem:0}}
\label{sec:proof}
\begin{proof}
    From Equation~\eqref{eq:minimax} in Algorithm~\ref{alg:2}, we have
    \begin{equation}
        \label{eq:minimax_2}
        \widehat{V}_{\tilde{\theta}} \leq  \widehat{V}_{{\theta}}, \quad \forall \theta\in \cC(\delta),
    \end{equation}
    where $\tilde{\theta}$ is the pessimistic estimation of $\theta$.
    
    Let $\cE_1$ be the event $\theta^\star\in\cC(\delta)$, then we have $\cP(\cE_1)\geq 1-\delta$ from Lemma~\ref{lem:reward_learning}. 

    Let $\cE_2$ be the event where the following inequality holds, 
    
    \begin{equation}
        |(\BB \widehat{V})(s,a) -(\widehat{\BB} \widehat{V})(s,a)| \leq \Gamma = \beta \sqrt{\phi(s,a)^\top\Lambda^{-1}\phi(s,a)}, \forall (s,a) \in \cS\times\cA.
    \end{equation}

    then we have $\cP(\cE_2)\geq 1-\delta$ from Lemma~\ref{lemma:xi_quantifier}.

    Condition on $\cE_1 \cap \cE_2$, we have.
    \begin{align*}
        V_{\theta^\star}^{\pi^*} - V_{\theta^\star}^{\widehat{\pi}}  & = V_{\theta^\star}^{\pi^*} - \widehat{V}_{\theta^\star} + \widehat{V}_{\theta^\star} - V_{\theta^\star}^{\widehat{\pi}} \\
        & \leq V_{\theta^\star}^{\pi^*} - \widehat{V}_{\theta^\star}\\
        & = V_{\theta^\star}^{\pi^*} - V_{\tilde{\theta}}^{\pi^*} + V_{\tilde{\theta}}^{\pi^*} - \widehat{V}_{\tilde{\theta}} + \widehat{V}_{\tilde{\theta}} - \widehat{V}_{\theta^\star} \\
        & \leq V_{\theta^\star}^{\pi^*} - V_{\tilde{\theta}}^{\pi^*} + V_{\tilde{\theta}}^{\pi^*} - \widehat{V}_{\tilde{\theta}}\\
        & = V_{\theta^\star}^{\pi^*} - V_{\widehat{\theta}}^{\pi^*} + V_{\widehat{\theta}}^{\pi^*} - V_{\tilde{\theta}}^{\pi^*} + 
        V_{\tilde{\theta}}^{\pi^*} - \widehat{V}_{\tilde{\theta}}\\
        & \leq \frac{4\rmax}{1-\gamma}\sqrt{\frac{d^2\zeta_2}{N_0C_0^\dagger}} + \frac{2c\rmax}{(1-\gamma)^2}\sqrt{\frac{d^3\zeta_1}{N_0C_0^\dagger+N_1C_1^\dagger}},\\
    \end{align*}
    where the first inequality follows from Lemma~\ref{lemma:subopt_part2}. The second inequality follows from Equation~\eqref{eq:minimax_2}, and the last inequality follows from Lemma~\ref{lemma:sub_rew} and \ref{lemma:subopt_part1}. 

    From the union bound, we have that the above inequality holds with a probability of $1-2\delta$.
\end{proof}

\section{Addtional Lemmas and Missing Proofs}
\begin{lemma}
    \label{lemma:subopt_part1}
    Under the event in Lemma~\ref{lemma:xi_quantifier}, we have
    \begin{align*}
        V^{{\pi}^*}_\theta(s) - \widehat{V}_\theta(s)& \leq   \frac{2c r_{\text{max}}}{(1-\gamma)^2}  \sqrt{ \frac{d^3\zeta}{C^\dagger N}},
    \end{align*}
    with probability $1-\delta$, for all $\|\theta\|_2^2 \leq d$.
\end{lemma}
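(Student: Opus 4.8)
The plan is to prove Lemma~\ref{lemma:subopt_part1} by invoking the standard suboptimality decomposition for PEVI (Algorithm~\ref{alg:1}) applied to the MDP with reward parameter $\theta$ on the pooled dataset $\cD_0\cup\cD_1$, and then controlling the resulting bonus term via the coverage coefficient. First I would recall that for the \emph{fixed} reward $r_\theta=\langle\phi,\theta\rangle$, running PEVI on a dataset of size $N=N_0+N_1$ gives, under the event of Lemma~\ref{lemma:xi_quantifier} (that $\Gamma(s,a)=\beta(\phi(s,a)^\top\Lambda^{-1}\phi(s,a))^{1/2}$ is a valid $\xi$-uncertainty quantifier), the telescoping bound
\[
V^{\pi^*}_\theta(s)-\widehat V_\theta(s)\;\le\; 2\,\EE_{\pi^*}\!\Big[\sum_{t=0}^{\infty}\gamma^t\,\Gamma(s_t,a_t)\,\Big|\,s_0=s\Big].
\]
This is the infinite-horizon discounted analogue of Theorem 4.2 in \citet{jin2021pessimism}; it follows from the fact that $\widehat V_\theta$ is pessimistic (so $\widehat Q_\theta\le \BB\widehat V_\theta$ pointwise by the quantifier bound) together with the standard value-difference lemma, and the geometric series gives the $1/(1-\gamma)$ factor once rather than $H$. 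I would state this as an intermediate step, citing the appropriate PEVI lemma, rather than rederiving it.

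Next I would bound the expected bonus. Write $u_t=\EE_{\pi^*}[\phi(s_t,a_t)\phi(s_t,a_t)^\top\mid s_0=s]$. By Jensen/Cauchy--Schwarz,
\[
\EE_{\pi^*}\big[\Gamma(s_t,a_t)\mid s_0=s\big]\le \beta\,\sqrt{\EE_{\pi^*}[\phi(s_t,a_t)^\top\Lambda^{-1}\phi(s_t,a_t)\mid s_0=s]}=\beta\sqrt{\mathrm{tr}(\Lambda^{-1}u_t)}.
\]
The definition of the coverage coefficient $C^\dagger$ gives $\tfrac1N\sum_\tau \phi(s_\tau,a_\tau)\phi(s_\tau,a_\tau)^\top\succeq C^\dagger u_t$ for every $t$, hence $\Lambda = \lambda I + N\cdot(\tfrac1N\sum_\tau\phi\phi^\top)\succeq \lambda I + NC^\dagger u_t \succeq NC^\dagger u_t$, so $\Lambda^{-1}\preceq \tfrac{1}{NC^\dagger}u_t^{+}$ in the appropriate sense and $\mathrm{tr}(\Lambda^{-1}u_t)\le \tfrac{d}{NC^\dagger}$ (using $\mathrm{tr}(u_t^{+}u_t)\le d$ and $\|\phi\|\le1$). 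Summing the geometric series $\sum_t\gamma^t = 1/(1-\gamma)$ yields
\[
V^{\pi^*}_\theta(s)-\widehat V_\theta(s)\le \frac{2\beta}{1-\gamma}\sqrt{\frac{d}{NC^\dagger}}.
\]
Finally, plugging in $\beta = \frac{c\,d\sqrt{\zeta_1}}{1-\gamma}\rmax$ from Theorem~\ref{theorem:0} (here $\zeta$ is $\zeta_1$ on the pooled dataset) gives exactly $\frac{2c\,\rmax}{(1-\gamma)^2}\sqrt{\frac{d^3\zeta}{C^\dagger N}}$, and the constraint $\|\theta\|_2^2\le d$ is what is needed for $r_\theta\in[0,\rmax]$ (up to the clipping in Equation~\eqref{eq:reward_estimation}) so that the PEVI guarantee and the quantifier of Lemma~\ref{lemma:xi_quantifier} apply uniformly.

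The main obstacle I anticipate is making the coverage step rigorous when $u_t$ is rank-deficient: one cannot literally invert $u_t$, so I would instead argue directly that $\mathrm{tr}(\Lambda^{-1}u_t)\le \frac{1}{NC^\dagger}\,\mathrm{tr}(\Lambda^{-1}\cdot NC^\dagger u_t)\le \frac{1}{NC^\dagger}\,\mathrm{tr}(\Lambda^{-1}(\Lambda-\lambda I))\le \frac{d}{NC^\dagger}$, using monotonicity of trace under the PSD order $NC^\dagger u_t\preceq \Lambda-\lambda I\preceq\Lambda$ together with $\mathrm{tr}(\Lambda^{-1}\Lambda)=d$. A secondary subtlety is that $\beta$ must be large enough for the event of Lemma~\ref{lemma:xi_quantifier} to hold with the stated probability on the pooled dataset of size $N_0+N_1$; this is exactly why $\zeta_1$ (rather than $\zeta_2$) appears and why the self-normalized concentration bound is stated for $N=N_0+N_1$. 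Everything else — the value-difference telescoping and the Jensen step — is routine.
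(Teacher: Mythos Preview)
Your proposal is correct and follows essentially the same route as the paper: the PEVI telescoping bound $V^{\pi^*}_\theta-\widehat V_\theta\le 2\,\EE_{\pi^*}[\sum_t\gamma^t\Gamma(s_t,a_t)]$, then Jensen to pass to $\mathrm{tr}(\Lambda^{-1}\cdot)$, then the coverage coefficient, then plugging in $\beta$. The only noticeable difference is that the paper aggregates the time steps into a single discounted-occupancy matrix $\Sigma_{\pi^*,s}=\EE_{d^{\pi^*}}[\phi\phi^\top]$ and bounds $\mathrm{tr}(\Sigma_{\pi^*,s}(I+NC^\dagger\Sigma_{\pi^*,s})^{-1})$ via its eigenvalues $\lambda_j\in[0,1]$, whereas you bound each $\mathrm{tr}(\Lambda^{-1}u_t)$ directly via the PSD trace monotonicity $NC^\dagger u_t\preceq\Lambda-\lambda I$; both yield the same $d/(NC^\dagger)$ bound up to lower-order terms.
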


\begin{proof}
\label{proof_lemma_subopt_part1}
    Let
    \begin{equation}
        \delta(s,a) = r(s,a) + \gamma \EE_{s'\sim\cP(\cdot|s,a)}\widehat{V}(s') - \widehat{Q}(s,a),
    \end{equation}
    From the definition of $\widehat{Q}(s,a)$ and $\widehat{V}(s)$, we have 
    \begin{align}
        \delta(s,a)=\BB_\gamma \widehat{V}(s) - \widehat{Q}(s,a) = \BB_\gamma \widehat{V}(s) - \widehat \BB_\gamma \widehat{V}+\Gamma(s,a).
    \end{align}
    Under the condition of Lemma~\ref{lemma:xi_quantifier}, it holds that 
    \begin{align}
        0\leq\delta(s,a)\leq 2\Gamma(s,a), \text{for all}~s,a. \label{gamma_inequality}
    \end{align}
    Then we have 
    \begin{align*} 
        &V^{{\pi}^*}_\theta(s) - \widehat{V}_\theta(s) \\
        =&  \EE_{a\sim{\pi}^*,s'\sim\cP(\cdot|s,a)}\left[r(s,a)+\gamma V^{{\pi}^*}(s')\right]- \EE_{a\sim\widehat{\pi}}\left[\widehat{Q}(s,a)\right] \\
        =  & \EE_{a\sim{\pi}^*,s'\sim\cP(\cdot|s,a)}\left[r(s,a)+\gamma V^{{\pi}^*}(s')-\widehat{Q}(s,a)\right]+ \EE_{a\sim{\pi}^*}\left[\widehat{Q}(s,a)\right] - \EE_{a\sim\widehat{\pi}}\left[\widehat{Q}(s,a)\right] \\
        = & \EE_{a\sim{\pi}^*,s'\sim\cP(\cdot|s,a)}\left[r(s,a)+\gamma \widehat{V}(s')-\widehat{Q}(s,a)\right] + \gamma \EE_{a\sim{\pi}^*,s'\sim\cP(\cdot|s,a)}\left[V^{{\pi}^*}(s') - \widehat{V}(s')\right] \\
        & + \innerprod{\widehat{Q}(s,\cdot), \pi^*(\cdot\given s)-\widehat{\pi}(\cdot\given s)}_\cA \\
        = & \EE_{a\sim{\pi}^*,s'\sim\cP(\cdot|s,a)}\left[\delta(s,a)\right] + \innerprod{\widehat{Q}(s,\cdot), \pi^*(\cdot\given s)-\widehat{\pi}(\cdot\given s)}_\cA + \cdots \\
        = & \EE_{\pi^*}\left[\sum_{t=0}^{\infty}\gamma^t\delta(s_t,a_t)\given s_0=s\right] + \EE_{\pi^*}\left[\sum_{t=0}^{\infty}\gamma^t \innerprod{\widehat{Q}(s_t,\cdot), \pi^*(\cdot\given s_t)-\widehat{\pi}(\cdot\given s_t)}_\cA \given s_0=s\right] \\
            \leq &  \EE_{\pi^*}\left[\sum_{t=0}^\infty{\gamma^t \delta(s_t,a_t)}\Biggiven s_0=s\right]\nonumber\\
            \leq&  2 \EE_{\pi^*}\Bigl[\sum_{t=0}^\infty \gamma^t \Gamma(s_t,a_t) \Biggiven s_0=s\Bigr]\nonumber\\
            = &  2 \beta \EE_{\pi^*}\Bigl[\sum_{t=0}^\infty \gamma^t \bigl(\phi(s_t,a_t)^\top \Lambda^{-1}\phi(s_t,a_t)\bigr)^{1/2} \Biggiven s_0=s\Bigr].
    \end{align*}
    
    Here the first inequality follows from the fact that $\widehat{\pi}(\cdot|s) = \argmax_\pi \innerprod{\widehat{Q}(\cdot,\cdot),\pi(\cdot|s)}$ and the second inequality follows from Equation~\eqref{gamma_inequality}. 

    By the Cauchy-Schwarz inequality, we have
    \begin{align}
        \label{eq:bound_eigen}
        &\EE_{\pi^*}\Bigl[ \sum_{t=0}^\infty \gamma^t \bigl(\phi(s_t,a_t)^\top \Lambda^{-1}\phi(s_t,a_t)\bigr)^{1/2} \Biggiven s_0=s\Bigr]\notag \\
        &\qquad = \frac{1}{1-\gamma}\EE_{d^{\pi^*}}\Bigl[ \sqrt{\Tr\big(\phi(s,a)^\top \Lambda^{-1}\phi(s,a)\big)} \Biggiven s_0=s\Bigr]\notag \\
        &\qquad = \frac{1}{1-\gamma}\EE_{d^{\pi^*}}\Bigl[ \sqrt{\Tr\big(\phi(s,a)\phi(s,a)^\top \Lambda^{-1}\big)} \Biggiven s_0=s\Bigr]\notag \\
        &\qquad \leq  \frac{1}{1-\gamma}\sqrt{\Tr\Big(\EE_{d^{\pi^*}}\big[\phi(s,a)\phi(s,a)^\top \biggiven s_0=s\big]\Lambda^{-1}\Big)} \notag \\
        &\qquad = \frac{1}{1-\gamma}\sqrt{\Tr\Big(\Sigma_{\pi^*,s}^\top \Lambda^{-1}\Big)},
    \end{align}
    for all $s\in \cS$. Then we have
    \begin{align}
    \label{eq:bound_eigen2}
        V^{{\pi}^*}_\theta(s) - \widehat{V}_\theta(s)&\leq 2 \beta \EE_{\pi^*}\Bigl[ \sum_{t=0}^\infty \gamma^t\bigl(\phi(s_t,a_t)^\top \Lambda^{-1}\phi(s_t,a_t)\bigr)^{1/2} \Biggiven s_0=s\Bigr]\notag\\
    &\leq\frac{ 2 \beta }{1-\gamma} \sqrt{\Tr\Big(\Sigma_{\pi^*,s}\cdot  \big(I + {C^\dagger} \cdot N \cdot \Sigma_{\pi^*,s} \big)^{-1}\Big)}\notag\\
    & =\frac{ 2 \beta }{1-\gamma} \sqrt{\sum_{j=1}^d \frac{\lambda_{j}(s)}{1+{C^\dagger}\cdot N \cdot \lambda_{j}(s)}}.
    \end{align}
    
    Here $\{\lambda_{j}(s)\}_{j=1}^d$ are the eigenvalues of $\Sigma_{\pi^*,s}$ for all $s\in \cS$, the second inequality follows from Equation~\eqref{eq:bound_eigen}.
    Meanwhile, by Definition \ref{assump:linear_mdp}, we have $\|\phi(s,a)\|\leq 1$ for all $(s,a)\in \cS \times \cA$. By Jensen's inequality, we have
    \begin{equation}
        \|\Sigma_{\pi^*,s}\|_{\oper} \leq \EE_{\pi^*}\big[ \|\phi(s,a)\phi(s,a)^\top \|_{\oper}\biggiven s_0=s  \big] \leq 1
    \end{equation}
    
    for all $s\in \cS$. As $\Sigma_{\pi^*,s}$ is positive semidefinite, we have $\lambda_{j}(s) \in [0,1]$ for all $s\in \cS$ and all $j\in [d]$. Hence we have
    \begin{align}
        V^{{\pi}^*}_\theta(s) - \widehat{V}_\theta(s)&\leq \frac{ 2 \beta }{1-\gamma} \sqrt{\sum_{j=1}^d \frac{\lambda_{j}(s)}{1+{C^\dagger}\cdot N \cdot \lambda_{j}(s)}} \notag\\
    &\leq \frac{ 2 \beta }{1-\gamma}  \sqrt{\sum_{j=1}^d \frac{1}{1+{C^\dagger}\cdot N}} \leq   \frac{2c r_{\text{max}}}{(1-\gamma)^2}  \sqrt{ \frac{d^3\zeta}{C^\dagger N}}\label{eq:value_bound}
    \end{align}
    for all $x\in \cS$, where the second inequality follows from the fact that $\lambda_{j}(s) \in [0,1]$ for all $s\in \cS$ and all $j\in [d]$, while the third inequality follows from the choice of the scaling parameter $\beta > 0$. 
    
    Then we have the conclusion in Lemma~\ref{lemma:subopt_part1}.
\end{proof}

\begin{lemma}
    \label{lemma:subopt_part2}
    Under the event in Lemma~\ref{lemma:xi_quantifier}, we have
    \begin{align}
        \widehat{V}_\theta(s) - V^{\widehat{\pi}_\theta}(s) \leq 0
    \end{align}
    with probability $1-\delta$, for all $\|\theta\|_2^2 \leq d$.
\end{lemma}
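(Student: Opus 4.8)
The statement is the ``pessimism'' half of the usual PEVI-style suboptimality decomposition: the value $\widehat{V}_\theta$ returned by Algorithm~\ref{alg:1} on the (relabeled) dataset never overestimates the true value $V^{\widehat{\pi}_\theta}$ of the greedy policy $\widehat{\pi}_\theta$ it induces. The plan is to exploit that, on the event of Lemma~\ref{lemma:xi_quantifier}, the negative bonus $\Gamma$ makes the empirical Bellman update a pointwise underestimate of the true Bellman update, so that the per-step Bellman evaluation error is nonnegative everywhere; a telescoping identity then expresses $\widehat{V}_\theta - V^{\widehat{\pi}_\theta}$ as minus a discounted sum of these nonnegative errors.

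Concretely, fix any $\theta$ with $\|\theta\|_2^2 \le d$ and work on the event of Lemma~\ref{lemma:xi_quantifier}, which gives $|(\widehat{\BB}\widehat{V}_\theta)(s,a) - (\BB\widehat{V}_\theta)(s,a)| \le \Gamma(s,a)$ for all $(s,a)$. Define the Bellman evaluation error $\iota_\theta(s,a) = (\BB\widehat{V}_\theta)(s,a) - \widehat{Q}_\theta(s,a)$. Since Algorithm~\ref{alg:1} sets $\widehat{Q}_\theta = \widehat{\BB}\widehat{V}_\theta - \Gamma$, we have $\iota_\theta(s,a) = (\BB\widehat{V}_\theta)(s,a) - (\widehat{\BB}\widehat{V}_\theta)(s,a) + \Gamma(s,a) \ge 0$ on this event. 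Next, writing $\widehat{V}_\theta(s) = \innerprod{\widehat{Q}_\theta(s,\cdot), \widehat{\pi}_\theta(\cdot\given s)}_\cA$ and expanding $V^{\widehat{\pi}_\theta}$ through its Bellman equation (exactly as in the decomposition used for Lemma~\ref{lemma:subopt_part1}, but comparing against $\widehat{\pi}_\theta$ rather than $\pi^*$), the policy-mismatch inner-product terms cancel because the same policy appears on both sides, and unrolling the recursion yields
\[
\widehat{V}_\theta(s) - V^{\widehat{\pi}_\theta}(s) = -\,\EE_{\widehat{\pi}_\theta}\Bigl[\,\sum_{t=0}^{\infty}\gamma^t \iota_\theta(s_t,a_t)\biggiven s_0=s\Bigr] \le 0,
\]
since every summand is nonnegative. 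As the event of Lemma~\ref{lemma:xi_quantifier} holds with probability at least $1-\delta$ and the bound above holds on that event simultaneously for every admissible $\theta$, the claim follows.

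I expect the only delicate point to be the uniformity in $\theta$: the single random event of Lemma~\ref{lemma:xi_quantifier} must certify $\Gamma$ as a valid uncertainty quantifier \emph{simultaneously} for all reward relabelings $\theta$ (equivalently, for every $\widehat{V}_\theta$ that can arise), so that $\iota_\theta \ge 0$ holds for all $\theta$ at once. This is delivered by the covering / self-normalized concentration argument behind Lemma~\ref{lemma:xi_quantifier} together with the uniform bound $\|\widehat{V}_\theta\|_\infty \le \rmax/(1-\gamma)$; everything else — the nonnegativity of $\iota_\theta$ and the telescoping identity — is routine bookkeeping, the only thing to watch being the signs.
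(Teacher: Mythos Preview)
Your proposal is correct and follows essentially the same route as the paper: define the one-step Bellman error $\iota_\theta(s,a)=(\BB\widehat V_\theta)(s,a)-\widehat Q_\theta(s,a)$ (the paper calls it $\delta$), use the event of Lemma~\ref{lemma:xi_quantifier} to get $\iota_\theta\ge 0$, and telescope along $\widehat\pi_\theta$ to obtain $\widehat V_\theta(s)-V^{\widehat\pi_\theta}(s)=-\EE_{\widehat\pi_\theta}[\sum_{t\ge 0}\gamma^t\iota_\theta(s_t,a_t)\mid s_0=s]\le 0$. Your explicit remark on uniformity in $\theta$ is a welcome addition that the paper leaves implicit.
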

\begin{proof}
    Similar to the proof of Lemma~\ref{lemma:subopt_part1}, let
    \begin{equation}
        \delta(s,a) = r(s,a) + \gamma \EE_{s'\sim\cP(\cdot|s,a)}\widehat{V}(s') - \widehat{Q}(s,a),
    \end{equation}
    we have
    \begin{align*} 
        \widehat{V}(s) - V^{\widehat{\pi}}(s) = &  \EE_{a\sim\widehat{\pi}}\left[\widehat{Q}(s,a)\right]- \EE_{a\sim\widehat{\pi},s'\sim\cP(\cdot|s,a)}\left[r(s,a)+\gamma V^{\widehat{\pi}}(s')\right] \\
        =&   \EE_{a\sim\widehat{\pi},s'\sim\cP(\cdot|s,a)}\left[\widehat{Q}(s,a)-r(s,a)-\gamma \widehat{V}(s')\right]\\
        & + \gamma\EE_{a\sim\widehat{\pi},s'\sim\cP(\cdot|s,a)}\left[ \widehat{V}(s') - V^{\widehat{\pi}}(s')\right] \\
         =&   -\EE_{\widehat{\pi}}\left[\delta(s,a)\right]+ \gamma\EE_{a\sim\widehat{\pi},s'\sim\cP(\cdot|s,a)}\left[\widehat{V}(s')-V^{\widehat{\pi}}(s')\right] \\
         =&   -\EE_{\widehat{\pi}}\left[\delta(s,a)\right]+ \cdots \\
         = &  -\EE_{\widehat{\pi}}\left[\sum_{t=0}^{\infty}\gamma^t\delta(s_t,a_t)\given s_0=s\right]. \\
    \end{align*}
    Then under the condition of Lemma~\ref{lemma:xi_quantifier}, it holds that 
    \begin{align}
        0\leq\delta(s,a)\leq 2\Gamma(s,a), \text{for all}~s,a,
    \end{align}
    Then we have the result immediately.

\end{proof}

\begin{lemma}[$\xi$-Quantifiers]
    \label{lemma:xi_quantifier}
    Let 
    \begin{equation}
        \lambda =1, \quad \beta= c\cdot d V_{\text{max}}\sqrt{\zeta}, \quad \zeta = \log{(2dN/(1-\gamma)\xi)}.
    \end{equation}
    Then $\Gamma=\beta \cdot \big( \phi(s, a)^\top  \Lambda ^{-1} \phi(s, a)  \big)^{1/2}$ are $\xi$-quantifiers with probability at least $1-\xi$.
    That is, let $\cE_2$ be the event that the following inequality holds, 
    \begin{equation}
        |(\BB \widehat{V})(s,a) -(\widehat{\BB} \widehat{V})(s,a)| \leq \Gamma = \beta \sqrt{\phi(s,a)^\top\Lambda^{-1}\phi(s,a)}, \forall (s,a) \in \cS\times\cA.
    \end{equation}
    Then we have $\cP(\cE_2) \geq 1-\varepsilon$.
\end{lemma}
\begin{proof}
    we have 
    \begin{align}
        \BB \widehat{V} -\widehat{\BB} \widehat{V} & = \phi(s,a)^\top (w-\widehat{w})\notag\\
        & = \phi(s,a)^\top w - \phi(s,a)\Lambda^{-1}\left(\sum_{\tau=1}^N{\phi_\tau(r_\tau+\gamma \widehat{V}(s_{\tau+1})}\right)\notag\\
        & = \underbrace{\phi(s,a)^\top w - \phi(s,a)\Lambda^{-1}\left(\sum_{\tau=1}^N{\phi_\tau\phi_\tau^\top w}\right)}_{\displaystyle \text{(i)}} +\underbrace{\phi(s,a)\Lambda^{-1}(\sum_{\tau=1}^N{\phi_\tau\phi_\tau^\top w}-\sum_{\tau=1}^N\phi_\tau(r_\tau+\gamma \widehat{V}(s_{\tau+1}))}_{\displaystyle \text{(ii)}}, \label{eq:term1_diff}
    \end{align}
    Then we bound $\text{(i)}$ and $\text{(ii)}$, respectively.

    For $\text{(i)}$, we have
    \begin{align}
        \label{eq:zzz888}
        \text{(i)} &= \phi(s,a)^\top w - \phi(s,a)\Lambda^{-1}(\Lambda-\lambda I)w \notag\\
        &= \lambda \phi(s,a)\Lambda^{-1}w\notag\\
        &\leq \lambda \norm{\phi(s,a)}_{\lambda^{-1}} \norm{w}_{\lambda^{-1}}\notag\\
        &\leq V_{\text{max}} \sqrt{d\lambda} \sqrt{\phi(s,a)^\top\Lambda^{-1}\phi(s,a)},
    \end{align}
    where the first inequality follows from Cauchy-Schwartz inequality. The second inequality follows from the fact that $\norm{\Lambda^{-1}}_{\text{op}}\leq \lambda^{-1}$ and Lemma~\ref{lemma:bounded_weight_value}.

    For notation simplicity, let $\epsilon_\tau = r_\tau+\gamma \widehat{V}(s_{\tau+1}) - \phi_\tau^\top w$, then we have 
    \begin{align}
        \label{eq:define_term3}
        |\text{(ii)}| & = \phi(s,a)\Lambda^{-1}\sum_{\tau=1}^N{\phi_\tau\epsilon_\tau}\notag\\
        & \leq  \norm{\sum_{\tau=1}^N{\phi_\tau\epsilon_\tau}}_{\Lambda^{-1}}\cdot\norm{\phi(s,a)}_{\Lambda^{-1}}\notag\\
        & =  \underbrace{\norm{\sum_{\tau=1}^N{\phi_\tau\epsilon_\tau}}_{\Lambda^{-1}}}_{\text{(iii)}} \cdot \sqrt{\phi(s,a)^\top \Lambda^{-1}\phi(s,a)}.
    \end{align}
    The term $\text{(iii)}$ is depend on the randomness of the data collection process of $\cD$. To bound this term, we resort to uniform concentration inequalities to upper bound
    \begin{equation}
        \sup_{V \in \cV(R,B,\lambda)} \Big\|  \sum_{\tau=1}^{N} \phi(s_\tau,a_\tau) \cdot \epsilon_\tau(V) \Big\|,\notag
    \end{equation}
    where
    \begin{equation}
        \cV(R,B,\lambda) = \{V(s;w,\beta,\Sigma):\mathcal{S}\rightarrow [0,V_{\text{max}}]~\text{with} \norm{w}\leq R, \beta\in[0,B],\Sigma\succeq \lambda\cdot I\},
    \end{equation}
    where $V(s; w,\beta,\Sigma) = \max_a\{\phi(s,a)^\top w-\beta \cdot\sqrt{\phi(s,a)^\top\Sigma^{-1}\phi(s,a)}\}$. For all $\epsilon>0$, let $\cN(\epsilon;R,B,\lambda)$ be the minimal cover if $\cV(R,B,\lambda)$. That is, for any function $V\in\cV(R,B,\lambda)$, there exists a function $V^\dagger\in \cN(\epsilon;R,B,\lambda)$, such that
    \begin{equation}
        \sup_{s\in\cS}{|V(s)-V^\dagger(s)|\leq \epsilon}.
    \end{equation}
    Let $R_0=V_{\text{max}}\sqrt{Nd/\lambda}, B_0=2\beta$, it is easy to show that at each iteration, $\widehat{V}^{u}\in \cV(R_0,B_0,\lambda)$. From the definition of $\BB$, we have 
    \begin{equation}
        |\BB\widehat{V}-\BB V^\dagger| = \gamma\left|\int{ (\widehat{V}(s')-V^\dagger(s'))\innerprod{\phi(s,a),\mu(s')}\ud s'} \right| \leq \gamma \epsilon.
    \end{equation}
    Then we have 
    \begin{equation}
        |(r+\gamma V -\BB V)- (r+\gamma V^\dagger -\BB V^\dagger)| \leq 2\gamma \epsilon.
    \end{equation}
    Let $\epsilon_\tau^\dagger = r(s_\tau,a_\tau)+\gamma V^\dagger(s_{\tau+1})-\BB V^\dagger(s,a)$, we have 
    \begin{align*}
        \text{(iii)}^2 = \norm{\sum_{\tau=1}^N\phi_\tau\epsilon_\tau}^2_{\Lambda^{-1}} &\leq 2 \norm{\sum_{\tau=1}^N\phi_\tau\epsilon^\dagger_\tau}^2_{\Lambda^{-1}} +2 \norm{\sum_{\tau=1}^N\phi_\tau(\epsilon^\dagger_\tau-\epsilon_\tau)}^2_{\Lambda^{-1}} \\
        & \leq 2 \norm{\sum_{\tau=1}^N\phi_\tau\epsilon^\dagger_\tau}^2_{\Lambda^{-1}} + 8\gamma^2 \epsilon^2 \sum_{\tau=1}^{N}|\phi_\tau \Lambda^{-1} \phi_\tau|\\
        & \leq 2 \norm{\sum_{\tau=1}^N\phi_\tau\epsilon^\dagger_\tau}^2_{\Lambda^{-1}} + 8\gamma^2 \epsilon^2 N^2 / \lambda
    \end{align*}

    It remains to bound $\norm{\sum_{\tau=1}^N\phi_\tau\epsilon^\dagger_\tau}^2_{\Lambda^{-1}}$. From the assumption for data collection process, it is easy to show that $\EE_\cD{[\epsilon_\tau \given \cF_{\tau-1}]}=0$, where $F_{\tau-1} = \sigma(\{(s_i,a_i)_{i=1}^{\tau}\cup (r_i,s_{i+1})_{i=1}^{\tau} \})$ is the $\sigma$-algebra generated by the variables from the first $\tau$ step. Moreover, since $\epsilon_\tau \leq 2V_{\text{max}}$, we have $\epsilon_\tau$ are $2V_{\text{max}}$-sub-Gaussian conditioning on $F_{\tau-1}$. Then we invoke Lemma \ref{lem:concen_self_normalized} with $M_0=\lambda \cdot I$ and $M_k  = \lambda \cdot  I + \sum_{\tau =1}^k \phi(s_\tau,a_\tau)\ \phi(s_\tau,a_\tau)^\top$. For the fixed function $V\colon \cS\to [0,V_{\text{max}}]$, we have
\begin{equation}
    \PP_{\cD}  \bigg( \Big\|   \sum_{\tau=1}^{N} \phi(s_\tau,a_\tau) \cdot \epsilon_\tau(V)  \Big\|_{\Lambda^{-1}}^2  >   8 V^2_{\text{max}} \cdot  \log \Big(  \frac{\det(\Lambda)^{1/2}}{\delta  \cdot \det( \lambda \cdot I) ^{1/2} }  \Big)   \bigg ) \leq   \delta
\end{equation}

  for all $\delta \in (0,1)$. Note that  $\|\phi(s,a)\|\leq 1$ for all $(s,a )\in \cS\times \cA$ by Definition \ref{assump:linear_mdp}.
  We have 
  \begin{equation*}
  \|\Lambda\|_{\oper}  =   \Big\|\lambda \cdot  I + \sum_{\tau=1}^N \phi(s_\tau,a_\tau)\phi(s_\tau,a_\tau)^\top \Big\| _{\oper} \leq  \lambda  +  \sum_{\tau = 1} ^N  \| \phi(s_\tau,a_\tau)\phi(s_\tau,a_\tau)^\top  \|_{\oper} \leq \lambda  + N,
  \end{equation*}
  where $\|\cdot\|_{\oper}$ denotes the matrix operator norm.
  Hence, it holds that
  $\det(\Lambda)\leq (\lambda+N)^d$ and $\det(\lambda \cdot I) = \lambda ^d $, which implies
\begin{align*}
    & \PP_{\cD}  \bigg( \Big\|   \sum_{\tau=1}^{N} \phi(s_\tau,a_\tau) \cdot \epsilon_\tau(V)  \Big\|_{\Lambda_{-1}}^2  >  4V^2_{\text{max}}\cdot \bigl (  2  \cdot \log(1/ \delta ) + d\cdot \log(1+N/\lambda)\big) 
\biggr)  \notag \\
&\qquad \leq \PP_{\cD}  \bigg( \Big\|   \sum_{\tau=1}^{N} \phi(s_\tau,a_\tau) \cdot \epsilon_\tau(V)  \Big\|_{\Lambda_{-1}}^2  >   8 V^2_{\text{max}} \cdot  \log \Big(  \frac{\det(\Lambda)^{1/2}}{\delta  \cdot \det( \lambda \cdot I) ^{1/2} }  \Big)   \bigg ) \leq   \delta. \notag
\end{align*} 

Therefore, we conclude the proof of Lemma \ref{lemma:xi_quantifier}. 

Applying Lemma~\ref{lemma:xi_quantifier} and the union bound, we have 
\begin{equation}
    \PP_{\cD} \bigg( \sup_{V \in \cN (\varepsilon)}     \Big\| \sum_{\tau=1}^{N} \phi(s_\tau,a_\tau) \cdot \epsilon_\tau(V)  \Big\|_{\Lambda^{-1}}^2 >  4V^2_{\text{max}}  \cdot \bigl (  2 \cdot \log(1/ \delta ) + d \cdot \log(1+N/\lambda)\big)   \biggr )  \leq   \delta \cdot | \cN(\varepsilon ) | .
\end{equation}

Recall that 
\begin{equation}
    \widehat V \in \cV (R_0, B_0, \lambda),\qquad \text{where}~~  R_0 = V_{\text{max}}\sqrt{ Nd/\lambda},~  B_0 = 2\beta,~ \lambda = 1 ,~ \beta = c \cdot d V_{\text{max}} \sqrt{\zeta}.
\end{equation}

Here $c>0$ is an absolute constant, $\xi\in (0,1)$ is the confidence parameter, and $\zeta = \log (2 d V_\text{max} / \xi) $ is specified in Algorithm~\ref{alg:1}. Applying Lemma \ref{lem:covering_num} with $\varepsilon = d V_{\text{max}} / N$,
we have 
\begin{align}\label{eq:apply_cov_num}
\log  | \cN(\varepsilon) | &  \leq  d \cdot \log  ( 1 + 4 d^{-1/2}N^{3/2}  ) + d^2 \cdot \log  ( 1 + 32 c^2\cdot d^{1/2}N^2\zeta  )\notag \\
&  \leq  d \cdot \log  ( 1 + 4 d^{1/2}N^2  ) + d^2 \cdot \log  ( 1 + 32 c^2 \cdot d^{1/2}N^2\zeta  ).
\end{align}
By setting $\delta=\xi/| \cN(\varepsilon ) |$, we have that with probability at least $1-\xi$,
\begin{align}
    \label{eq:bound_term3_8}
    & \Big\|  \sum_{\tau=1}^{N} \phi(s_\tau,a_\tau) \cdot \epsilon_\tau(\widehat{V}) \Big\|_{\Lambda^{-1}} ^2\notag
\\&\qquad \leq 8 V_{\text{max}}^2 \cdot  
\bigl ( 2 \cdot \log (V_{\text{max}}/ \xi  ) + 4d^2 \cdot \log  ( 64c^2\cdot d^{1/2 }  N^2  \zeta    ) + d\cdot \log(1+N) + 4 d^2  \bigr ) \notag
\\&\qquad \leq 8V_{\text{max}}^2 d^2 \zeta (4+\log{(64c^2)}).
\end{align}
Here the last inequality follows from simple algebraic inequalities. We set $c\geq 1$ to be sufficiently large, which ensures that $36+8\cdot \log(64c^2)\leq c^2/4$ on the right-hand side of Equation~\eqref{eq:bound_term3_8}. By Equations~\eqref{eq:define_term3} and~\eqref{eq:bound_term3_8}, it holds that  
\begin{equation}
    \label{eq:rrr888}
    | \text{(ii)}| \leq c/2 \cdot d V_{\text{max}} \sqrt{ \zeta } \cdot \sqrt{ \phi(x,a) ^\top  \Lambda  ^{-1} \phi(s,a) }  =  \beta /2 \cdot \sqrt{ \phi(x,a) ^\top  \Lambda  ^{-1} \phi(s,a) } 
\end{equation}
By Equations \eqref{eq:linear_uncertainty_quantifier}, \eqref{eq:term1_diff}, \eqref{eq:zzz888}, and \eqref{eq:rrr888}, for all $(s,a) \in \cS\times \cA$, it holds that 
\begin{equation}
    \bigl |  (\BB \widehat V ) (s,a) - (\widehat\BB \widehat V ) (s,a) \bigr |  \leq (  V_{\text{max}} \sqrt{d} +  \beta /2 ) \cdot \sqrt{ \phi(s,a) ^\top \Lambda  ^{-1} \phi(s,a) }  \leq \Gamma (s,a)
\end{equation}
 with probability at least $1 - \xi$. Therefore, we conclude the proof of Lemma~\ref{lemma:xi_quantifier}.
\end{proof}

\begin{lemma}[Bounded weight of value function]
    \label{lemma:bounded_weight_value}
    Let $V_{\text{\rm max}} = r_{\text{\rm max}}/(1-\gamma)$. For any function $V: \cS \rightarrow [0, V_{\text{\rm max}}]$, we have 
    \begin{align*}
        \norm{w} \leq V_{\text{\rm max}}\sqrt{d}, \norm{\widehat{w}} \leq V_{\text{\rm max}} \sqrt{\frac{Nd}{\lambda}}.
    \end{align*}
\end{lemma}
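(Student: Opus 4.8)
The plan is to leverage the linear structure of the MDP (Definition~\ref{assump:linear_mdp}) twice: once to obtain a closed form for the population weight $w$, and once to control the ridge estimator $\widehat w$ via a self-normalized Cauchy--Schwarz argument. Throughout I write $V_{\text{max}}=\rmax/(1-\gamma)$, and I recall that by definition $w$ satisfies $(\BB V)(s,a)=\langle\phi(s,a),w\rangle$ and $\widehat w=\Lambda^{-1}\sum_{\tau=1}^{N}\phi(s_\tau,a_\tau)\,(r_\tau+\gamma V(s_{\tau+1}))$.

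\emph{Step 1 (the population weight $w$).} In a linear MDP, $(\BB V)(s,a)=\langle\phi(s,a),\theta\rangle+\gamma\int_{\cS}V(s')\langle\phi(s,a),\mu(s')\rangle\,\ud s'=\langle\phi(s,a),w\rangle$ with $w=\theta+\gamma\int_{\cS}V(s')\mu(s')\,\ud s'$. I would then simply apply the triangle inequality together with $\norm{\theta}\le\sqrt d$, $\int_{\cS}\norm{\mu(s')}\,\ud s'=\norm{\mu(\cS)}\le\sqrt d$, and $0\le V\le V_{\text{max}}$, obtaining $\norm{w}\le\sqrt d+\gamma V_{\text{max}}\sqrt d=(1+\gamma V_{\text{max}})\sqrt d$; since $(1-\gamma)V_{\text{max}}=\rmax$, this is at most $V_{\text{max}}\sqrt d$ verbatim under the mild normalization $\rmax\ge 1$, and otherwise only with a harmless constant factor.

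\emph{Step 2 (the empirical weight $\widehat w$).} Setting $y_\tau=r_\tau+\gamma V(s_{\tau+1})$, first observe $y_\tau\in[0,\rmax+\gamma V_{\text{max}}]=[0,V_{\text{max}}]$ because $\rmax+\gamma V_{\text{max}}=(1-\gamma)V_{\text{max}}+\gamma V_{\text{max}}$. I would then bound $v^\top\widehat w$ for an arbitrary $v\in\RR^d$ and afterwards take $v=\widehat w$: by Cauchy--Schwarz in the $\Lambda^{-1}$-norm, $|v^\top\widehat w|\le\sum_\tau|v^\top\Lambda^{-1}\phi(s_\tau,a_\tau)|\cdot|y_\tau|\le V_{\text{max}}\norm{v}_{\Lambda^{-1}}\sum_\tau\norm{\phi(s_\tau,a_\tau)}_{\Lambda^{-1}}$; then Cauchy--Schwarz over $\tau$ together with the identity $\sum_\tau\phi(s_\tau,a_\tau)^\top\Lambda^{-1}\phi(s_\tau,a_\tau)=\Tr(\Lambda^{-1}(\Lambda-\lambda I))\le d$ gives $\sum_\tau\norm{\phi(s_\tau,a_\tau)}_{\Lambda^{-1}}\le\sqrt{Nd}$; finally $\norm{v}_{\Lambda^{-1}}\le\norm{v}/\sqrt\lambda$ since $\Lambda\succeq\lambda I$. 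Combining these yields $|v^\top\widehat w|\le V_{\text{max}}\sqrt{Nd/\lambda}\,\norm{v}$, and plugging in $v=\widehat w$ and cancelling $\norm{\widehat w}$ completes the proof.

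I do not anticipate any real obstacle: both parts are elementary once the linear-MDP representation is written out. The only mildly delicate points are the self-normalized Cauchy--Schwarz chain in Step 2 and the bookkeeping showing $y_\tau$ stays in $[0,V_{\text{max}}]$; the trace identity $\Tr(\Lambda^{-1}\sum_\tau\phi_\tau\phi_\tau^\top)\le d$ is the only structural fact needed beyond Definition~\ref{assump:linear_mdp}.
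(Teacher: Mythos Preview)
Your proposal is correct and essentially mirrors the paper's argument: both use the linear-MDP representation $w=\theta+\gamma\int V\,\mu$ with the norm bounds from Definition~\ref{assump:linear_mdp} for the first inequality (the paper, like you, implicitly needs $\rmax\ge 1$ to get the exact constant), and for $\widehat w$ both use $|y_\tau|\le V_{\text{max}}$, the bound $\Lambda\succeq\lambda I$, Cauchy--Schwarz over $\tau$, and the trace identity $\Tr\bigl(\Lambda^{-1}\sum_\tau\phi_\tau\phi_\tau^\top\bigr)\le d$. Your duality packaging via $v^\top\widehat w$ is a cosmetic variation on the paper's direct bound $\norm{\Lambda^{-1}\phi_\tau}\le\lambda^{-1/2}\norm{\phi_\tau}_{\Lambda^{-1}}$, but the ingredients and the resulting bound are identical.
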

\begin{proof}
    Since 
    \begin{align*}
        w^\top \phi(s,a) = \innerprod{M,\phi(s,a)} + \gamma \int{V(s')\psi(s')^\top M \phi(s,a)\ud s'},
    \end{align*}
    we have 
    \begin{align*}
        w &= M + \gamma \int{V(s')\psi(s')^\top M\ud s' } \\
        &= r_{\text{max}} \sqrt{d}+\gamma V_{\text{max}} \sqrt{d}\\
        &= V_{\text{max}} \sqrt{d}.
    \end{align*}
    For $\widehat{w}$, we have 
    \begin{align*}
        \norm{\widehat{w}} &= \norm{\Lambda^{-1}\sum_{\tau=1}^N{\phi_\tau (r_\tau+\gamma V(s_{\tau+1}))}} \\
        & \leq \sum_{\tau=1}^N{\norm{\Lambda^{-1}{\phi_\tau (r_\tau+\gamma V(s_{\tau+1}))}}}  \\
        & \leq V_{\text{max}}\sum_{\tau=1}^N{\norm{\Lambda^{-1}{\phi_\tau}}} \\
        & \leq V_{\text{max}}\sum_{\tau=1}^N{\sqrt{\phi_\tau^\top \Lambda^{-1/2}\Lambda^{-1}\Lambda^{-1/2}\phi_\tau}} \\
        & \leq \frac{V_{\text{max}}}{\sqrt{\lambda}}\sum_{\tau=1}^N{\sqrt{\phi_\tau^\top \Lambda^{-1}\phi_\tau}} \\
        & \leq V_{\text{max}}\sqrt{\frac{N}{\lambda}}\sqrt{\mathrm{Tr}(\Lambda^{-1}\sum_{\tau=1}^T{\phi_\tau\phi_\tau^\top })} \\
        & \leq V_{\text{max}}\sqrt{\frac{Nd}{\lambda}}.
    \end{align*}
\end{proof}

\begin{lemma}
\label{lemma:sub_rew}
    For policy $\pi^*$ and any reward function parameter $\theta \in \cC(\delta)$, we have 
    $$
     |V^{\pi^*}_\theta - V^{\pi^*}_{\widehat{\theta}}| \leq \frac{2 r_\text{max}}{1-\gamma} \sqrt{\frac{d^2\zeta_2}{N_0C^\dagger_0}}.
    $$
\end{lemma}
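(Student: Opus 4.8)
The plan is to use that $\theta$ and $\widehat\theta$ induce the \emph{same} transition dynamics, so the value gap $V^{\pi^*}_\theta - V^{\pi^*}_{\widehat\theta}$ is a discounted sum of per-step reward gaps along trajectories of $\pi^*$, and then to recycle the elliptical-potential/coverage argument already carried out in the proof of Lemma~\ref{lemma:subopt_part1}. Concretely, since only the reward vector changes, for every $s\in\cS$ one has the telescoping identity
\[
V^{\pi^*}_\theta(s) - V^{\pi^*}_{\widehat\theta}(s) = \EE_{\pi^*}\Big[\sum_{t=0}^\infty \gamma^t\, \phi(s_t,a_t)^\top(\theta-\widehat\theta) \Bigm| s_0=s\Big],
\]
the interchange of the infinite sum with the expectation being justified since $\|\phi\|\le 1$, $\|\theta-\widehat\theta\|_2 \le \alpha/\sqrt\nu$ (as $\Lambda\succeq \nu I$), and $\gamma<1$. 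Taking absolute values and pushing them inside,
\[
\big|V^{\pi^*}_\theta(s) - V^{\pi^*}_{\widehat\theta}(s)\big| \le \EE_{\pi^*}\Big[\sum_{t=0}^\infty \gamma^t\, \big|\phi(s_t,a_t)^\top(\theta-\widehat\theta)\big| \Bigm| s_0=s\Big].
\]

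Next I would invoke the Cauchy--Schwarz bound for members of the confidence set (the unnamed lemma following Lemma~\ref{lem:reward_learning}): for any $\theta\in\cC(\delta)$ and any $(s,a)$, $\big|\phi(s,a)^\top(\theta-\widehat\theta)\big| \le \alpha\,(\phi(s,a)^\top\Lambda^{-1}\phi(s,a))^{1/2}$, where here $\Lambda = \nu I + \sum_{\tau=1}^{N_0}\phi(s_\tau,a_\tau)\phi(s_\tau,a_\tau)^\top$ is the Gram matrix of the \emph{labeled} dataset $\cD_0$ only --- the key point being that reward uncertainty depends only on labeled data, so that $N_0$ and $C_0^\dagger$ (not $N_0+N_1$) enter here. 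Plugging this in bounds $\big|V^{\pi^*}_\theta(s) - V^{\pi^*}_{\widehat\theta}(s)\big|$ by $\alpha$ times $\EE_{\pi^*}[\sum_{t\ge0}\gamma^t (\phi(s_t,a_t)^\top\Lambda^{-1}\phi(s_t,a_t))^{1/2}\mid s_0=s]$, which is exactly the expression handled in Equations~\eqref{eq:bound_eigen}--\eqref{eq:value_bound} of the proof of Lemma~\ref{lemma:subopt_part1}: Cauchy--Schwarz and Jensen give $\le \frac{1}{1-\gamma}\sqrt{\Tr(\Sigma_{\pi^*,s}\Lambda^{-1})}$, the definition of $C_0^\dagger$ gives the operator-order bound $\Lambda \succeq \nu I + N_0 C_0^\dagger\,\Sigma_{\pi^*,s}$, and using $\nu=1$ together with $\lambda_j(s)\in[0,1]$ for the eigenvalues of $\Sigma_{\pi^*,s}$ (which follows from $\|\phi\|\le1$ and Jensen) yields $\Tr(\Sigma_{\pi^*,s}\Lambda^{-1}) \le \sum_{j=1}^d \frac{\lambda_j(s)}{1+N_0 C_0^\dagger\lambda_j(s)} \le \frac{d}{N_0 C_0^\dagger}$. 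Hence $\big|V^{\pi^*}_\theta(s) - V^{\pi^*}_{\widehat\theta}(s)\big| \le \frac{\alpha}{1-\gamma}\sqrt{d/(N_0 C_0^\dagger)}$, and substituting $\alpha = 2\sqrt{d\zeta_2}\cdot\rmax$ gives the claimed $\frac{2\rmax}{1-\gamma}\sqrt{\frac{d^2\zeta_2}{N_0 C_0^\dagger}}$.

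There is essentially no new technical obstacle here; the whole argument is a reuse of the machinery of Lemma~\ref{lemma:subopt_part1}, with $2\beta$ replaced by $\alpha$ and $(N,C^\dagger)$ replaced by $(N_0,C_0^\dagger)$, and with the per-step bonus replaced by the confidence-set width. The only two points that need a little care are (i) making sure the comparison uses the labeled-data Gram matrix $\Lambda$ and the labeled coverage coefficient $C_0^\dagger$ --- this is the whole conceptual reason the reward-bias term in Theorem~\ref{theorem:0} cannot be shrunk by unlabeled data --- and (ii) noting that the bound holds deterministically on the event $\theta\in\cC(\delta)$ once a dataset $\cD_0$ with $C_0^\dagger>0$ is fixed, so no extra union-bound budget is spent beyond the event $\cE_1$ already accounted for in the proof of Theorem~\ref{theorem:0}.
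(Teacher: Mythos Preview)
Your proposal is correct and follows essentially the same approach as the paper: write the value gap as a discounted expectation of per-step reward differences, bound each term via Cauchy--Schwarz using $\theta\in\cC(\delta)$ to get $\alpha\sqrt{\phi^\top\Lambda^{-1}\phi}$ with the labeled-data Gram matrix, and then reuse the trace/eigenvalue argument of Equations~\eqref{eq:bound_eigen}--\eqref{eq:bound_eigen2} with $(N_0,C_0^\dagger)$ and substitute $\alpha=2r_{\max}\sqrt{d\zeta_2}$. Your observations about the labeled-only Gram matrix and the absence of additional probabilistic budget are on point and match the paper's use of the lemma in the proof of Theorem~\ref{theorem:0}.
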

\begin{proof}
From the definition, we have 
\$
    |r_\theta(s,a) - r_{\widehat{\theta}}(s,a)| & = |\phi(s,a)^{\top}\theta-\phi(s,a)^{\top}\widehat{\theta}| \\
    &\leq \|\theta-\widehat{\theta}\|_{\Lambda}\cdot\|\phi(s,a)\|_{\Lambda^{-1}} \\
    &\leq \alpha \sqrt{\phi(s,a)^{\top}\Lambda^{-1} \phi(s,a)}, \\
\$
Where the first inequality follows from the Cauchy-Schwartz inequality and the second inequality uses the fact that $\theta \in \cC(\delta)$.
Then we have 
\$
 V^{\pi^*}_\theta(s) - V^{\pi^*}_{\widehat{\theta}}(s) &= \EE_{{\pi^*}}\left[\sum_{t=0}^{\infty}\gamma^t(r_\theta(s,a) - r_{\widehat{\theta}}(s,a))\Biggiven s_0=s\right] \\
 &\leq \EE_{{\pi^*}}\left[\sum_{t=0}^{\infty}\gamma^t|r_\theta(s,a) - r_{\widehat{\theta}}(s,a)|\Biggiven s_0=s\right] \\
 & \leq  \alpha \EE_{{\pi^*}}\left[\sum_{t=0}^{\infty}\gamma^t\sqrt{\phi(s,a)^{\top}\Lambda^{-1} \phi(s,a)}\Biggiven s_0=s\right] \\
  &\leq \frac{\alpha }{1-\gamma}  \sqrt{\sum_{j=1}^d \frac{1}{1+C_0^\dagger\cdot N_0}} \leq   \frac{2r_{\text{max}}}{1-\gamma}  \sqrt{\frac{d^2\zeta_2}{N_0C_0^\dagger}}, \\
\$
where the second to last inequality follows similarly as Equation \eqref{eq:bound_eigen} \eqref{eq:bound_eigen2} and the last inequality follows from the fact that $\alpha = 2\rmax\sqrt{d\zeta_2}$.

Note that such a choice of $\alpha$ is sufficient for Lemma~\ref{lem:reward_learning} to hold since
\$
\sqrt{\nu}+\rmax\cdot\sqrt{2\log{\frac{1}{\delta}}+d\log{(1+\frac{N_0}{\nu d})}} & = 1+\rmax\cdot\sqrt{2\log{\frac{1}{\delta}}+d\log{(1+\frac{N_0}{d})}} \\
&\leq 1+\rmax\cdot\sqrt{2\log{\frac{1}{\delta}}+d\log{2N_0}} \\
& \leq 1+\rmax\cdot\sqrt{d\log{\frac{2N_0}{\delta}}} \\
& \leq 2\rmax\cdot\sqrt{d\log{\frac{2N_0}{\delta}}} \\
& = 2\rmax \sqrt{d\zeta_2},
\$
where the inequalities holds for sufficiently small $\delta>0$ and $d\geq2$.
\end{proof}

\subsection{Technical Lemmas}
\begin{lemma}[$\varepsilon$-Covering Number \citep{jin2020provably}] 
	For all $h\in [H]$ and all $\varepsilon > 0$, 
	we have 	\label{lem:covering_num}
	$$
	\log | \cN (  \varepsilon; R, B, \lambda)  | \leq d \cdot \log (1+ 4 R /  \varepsilon  ) + d^2  \cdot  \log\bigl(1+ 8 d^{1/2} B^2 / ( \varepsilon^2\lambda) \bigr). 	$$
\end{lemma}

\begin{proof}[Proof of Lemma \ref{lem:covering_num}]
	See Lemma D.6 in \cite{jin2020provably} for detailed proof. 
\end{proof}

\begin{lemma}[Concentration of Self-Normalized Processes \citep{abbasi2011improved}]
    Let $\{\cF_t \}^\infty_{t=0}$ be a filtration and $\{\epsilon_t\}^\infty_{t=1}$ be an $\RR$-valued stochastic process such that $\epsilon_t$ is $\cF_{t} $-measurable for all $t\geq 1$.
    Moreover, suppose that conditioning on $\cF_{t-1}$, 
     $\epsilon_t $ is a  zero-mean and $\sigma$-sub-Gaussian random variable for all $t\geq 1$, that is,  
     \$
      \EE[\epsilon_t\given \cF_{t-1}]=0,\qquad \EE\bigl[ \exp(\lambda \epsilon_t) \biggiven \cF_{t-1}\bigr]\leq \exp(\lambda^2\sigma^2/2) , \qquad \forall \lambda \in \RR. 
      \$
     Meanwhile, let $\{\phi_t\}_{t=1}^\infty$ be an $\RR^d$-valued stochastic process such that  $\phi_t $  is $\cF_{t -1}$-measurable for all $ t\geq 1$. 
    Also, let  $M_0 \in \RR^{d\times d}$ be a  deterministic positive-definite matrix and 
    \$
    M_t = M_0 + \sum_{s=1}^t \phi_s\phi_s^\top
    \$ for all $t\geq 1$. For all $\delta>0$, it holds that
    \begin{equation*}
    \Big\| \sum_{s=1}^t \phi_s \epsilon_s \Big\|_{ M_t ^{-1}}^2 \leq 2\sigma^2\cdot  \log \Bigl( \frac{\det(M_t)^{1/2}\cdot \det(M_0)^{- 1/2}}{\delta} \Bigr)
    \end{equation*}
    for all $t\ge1$ with probability at least $1-\delta$.
    \label{lem:concen_self_normalized}
\end{lemma}
\begin{proof}
    See Theorem 1 of \cite{abbasi2011improved} for a detailed proof. 
\end{proof}

\clearpage
\section{Experimental Settings}
\label{sec:exp_setting}
\subsection{Multi-task Antmaze}
We first divide the source dataset~(e.g., antmaze-medium-diverse-v2) into the \verb|directed| dataset or the \verb|undirected| dataset.
The trajectories in the \verb|undirected| dataset are randomly and uniformly divided into different tasks. In contrast, the \verb|directed| dataset is associated with the goal closest to the final state of the trajectory.
Then, for each subtask~(e.g., goal=(0.0, 16.0)), we relabel the corresponding \verb|undirected| or \verb|undirected| dataset according to the goal.
Finally, we keep the rewards in the target task dataset~(labeled dataset) while setting the rewards in the other task dataset to 0~(unlabeled dataset).
We visualize the \verb|directed| or \verb|undirected| dataset in Figure~\ref{fig:visual_antmaze_dataset}.
We can find that the distribution shift issue in the \verb|directed| dataset is more severe than the \verb|undirected| dataset, which further challenges the unlabeled data sharing algorithms.

\begin{figure}[h]
    \centering
    \subfigure{\includegraphics[scale=0.35]{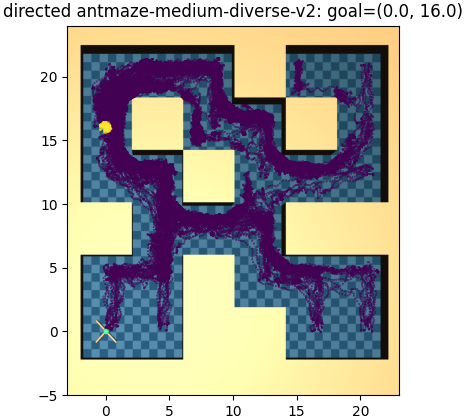}}
    \hspace{1mm}
    \subfigure{\includegraphics[scale=0.35]{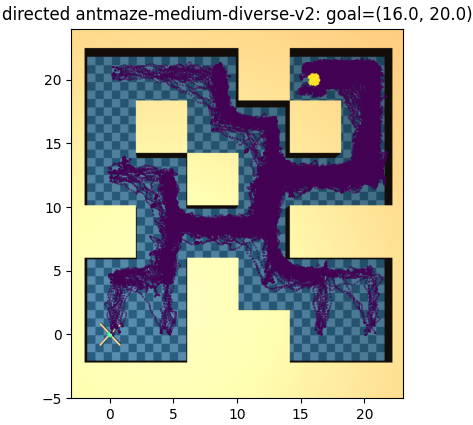}}
    \hspace{1mm}
    \subfigure{\includegraphics[scale=0.35]{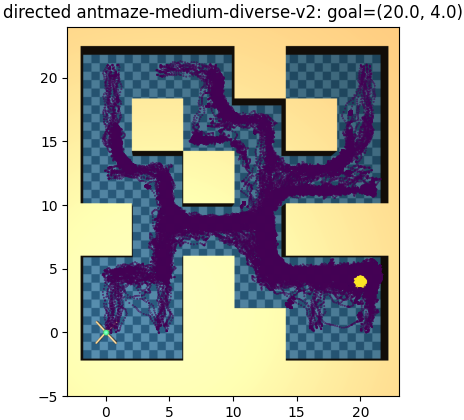}}
    
    \subfigure{\includegraphics[scale=0.35]{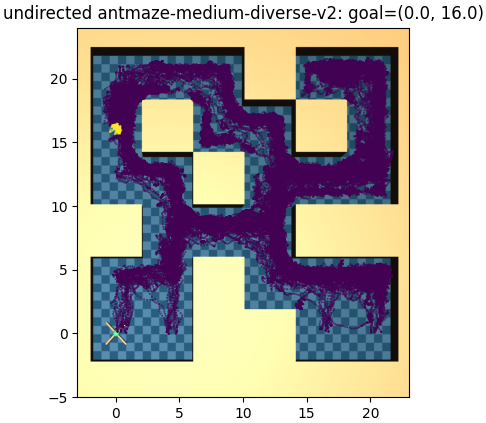}}
    \hspace{1mm}
    \subfigure{\includegraphics[scale=0.35]{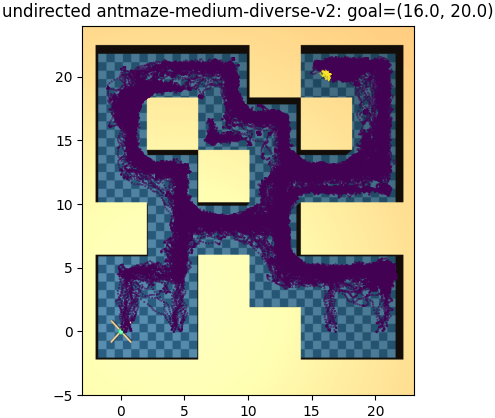}}
    \hspace{1mm}
    \subfigure{\includegraphics[scale=0.35]{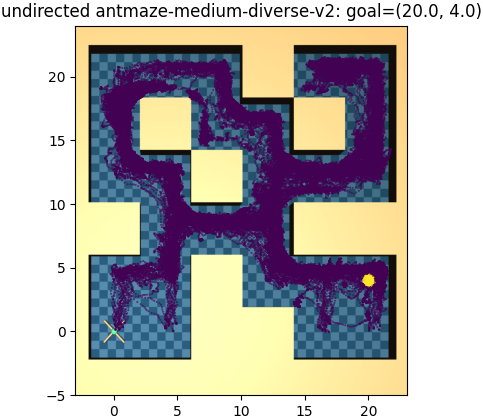}}
    \caption{Visualization of multi-task antmaze-medium-diverse-v2 datasets.
    The purple dots denote the transition of reward 0. 
    The yellow dots denote the transition near the goal of the sub-task and the reward is +1.}
    \label{fig:visual_antmaze_dataset}
\end{figure}

\subsection{Multi-task Meta-World}
We consider the same setup as in CDS~\citep{yu2021conservative} and four tasks: \verb|door open|, \verb|door close|, \verb|drawer open| and \verb|drawer close|, which is shown in Figure~\ref{fig:meta-world}.
We use medium-replay datasets with 152K transitions for task door open and drawer close and use 10 expert trajectories for task door close and drawer open.

\begin{figure}[h]
    \centering
    \subfigure[Door Open]{\includegraphics[scale=0.25]{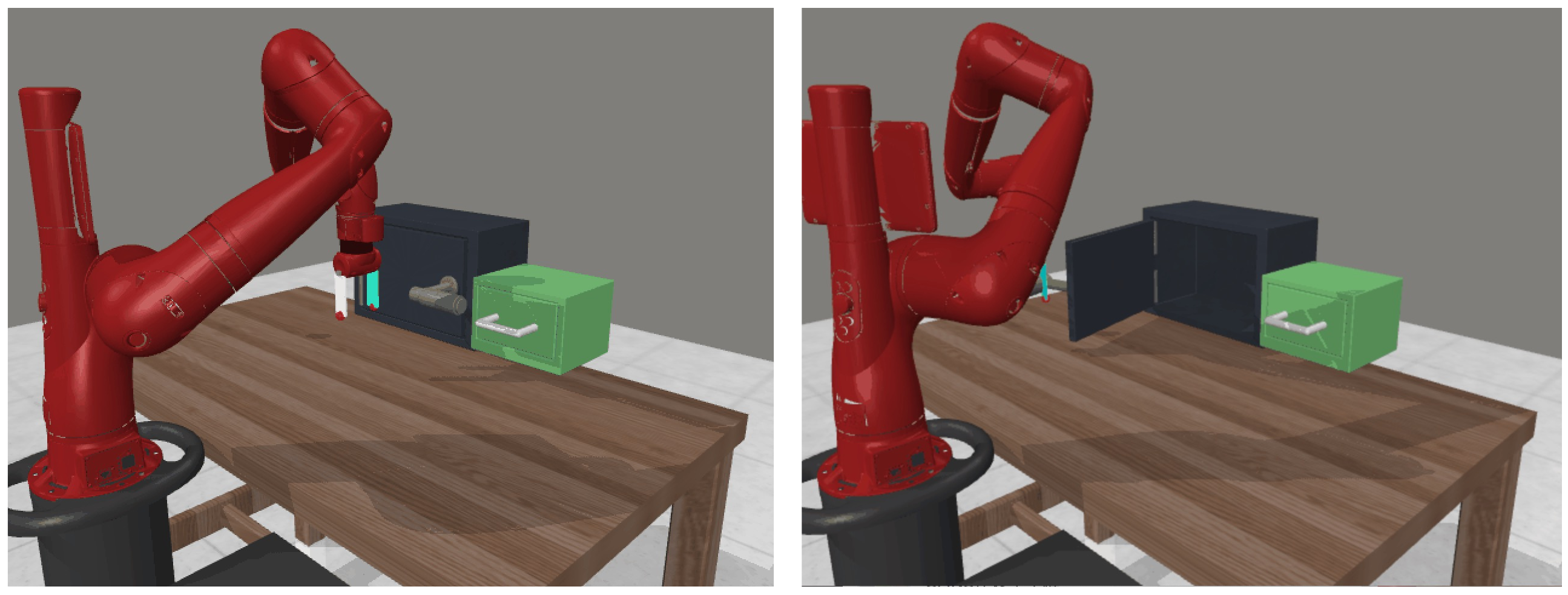}}
    \hspace{1mm}
    \subfigure[Drawer Close]{\includegraphics[scale=0.25]{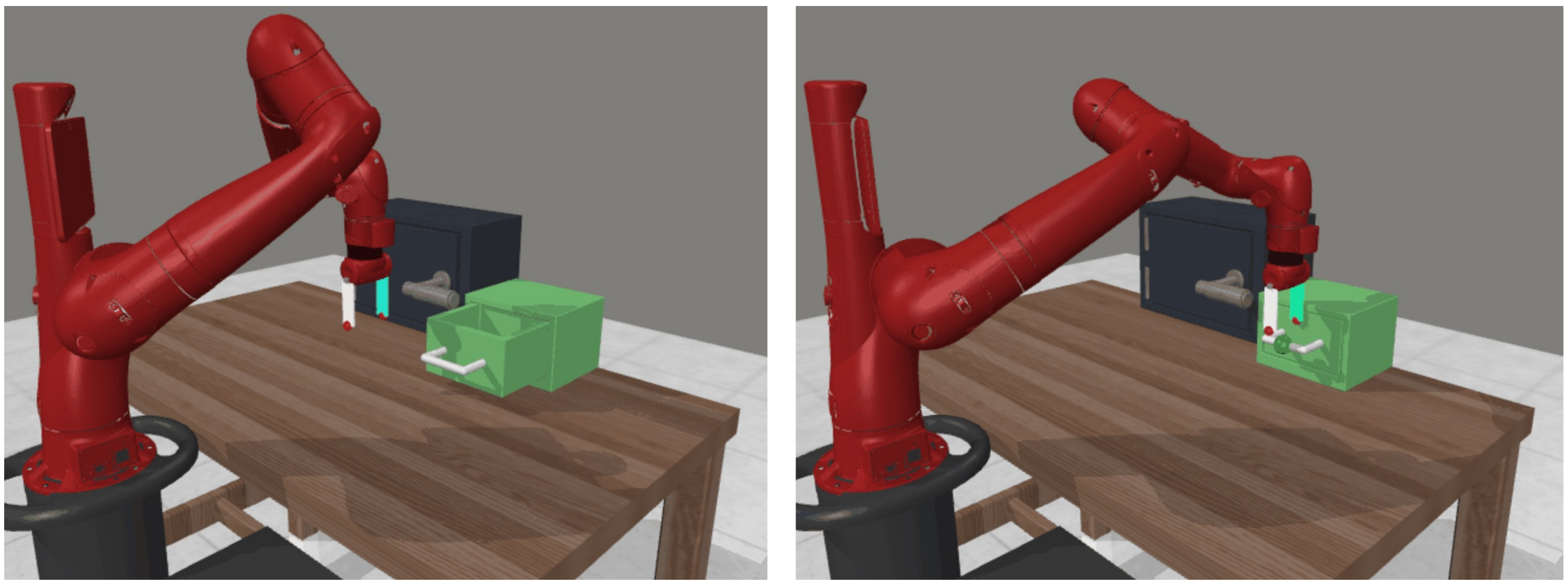}}
    \caption{Visualization of subtasks in Meta-World.}
    \label{fig:meta-world}
\end{figure}

\clearpage

\section{More discussion on the relative performance of UDS}
\label{uds:discussion}
Based on the discussion in the main text, we can summarize the factors that affect the relative performance of PDS algorithms as follows.

\begin{table}[H]
\centering
\begin{tabular}{|p{2cm}|P{2.5cm}|P{2.5cm}|P{2.5cm}|P{2.5cm}|}
\hline
{scenarios}                 & large number of unlabeled data & high quality of unlabeled data & long horizon & large dimension \\ \hline
\textbf{finite-sample term} &  \checkmark                              &   \checkmark                             &              &                 \\ \hline
\textbf{asymptotic term}    &                                &                                &   \checkmark           &   \checkmark              \\ \hline
\end{tabular}
\caption{Scenarios where PDS has better relative performance.}
\end{table}
However, it is worth noting that the asymptotic term is dependent on the backbone offline algorithm we choose. For model-based algorithms like~\citet{uehara2021pessimistic}, the performance bound scale as $\cO(d)$ and thus the problem's dimension does not affect the asymptotic term. The dependence over the discount factor is also dependent on the choice of the backbone algorithm. However, it is known that the lower bound of offline RL algorithms in linear MDPs scales as $\cO((1-\gamma)^{1.5})$ so this asymptotic term must scales at least as $\cO((1-\gamma)^{0.5})$ and thus negatively depends on the discount factor. That is, the larger the discount factor, the better the relative performance of PDS.

\section{Discussion on the Performance Bound}
\label{bound_discussion}
\subsection{Construction of Adversarial Examples}
In this section, we show that there is an MDP and an ``unfortunate'' dataset such that the suboptimality of Algorithm 3 matches the performance bound in Theorem 4.3. We first focus on the case without data sharing. The same techniques can be used to match the reward-learning bound.
We only need to show that all inequalities in Theorem 4.3 can become equality.
Suppose we have an MDP with one state and $N>d$ actions. $d$ of them are optimal actions, with feature map $$\phi(s,a_i) = (\underbrace{0,\ldots}_{i-1~\text{zeros}},\sqrt{d},0,\ldots)$$
and let the optimal policy be uniform over $d$ actions. Such construction makes $\Sigma_{\pi*,s}=I$ so that inequalities in Equation (25)$\sim$(27) become equalities.
Then we let samples from other actions match the confidence upper bound while samples from the optimal action match the confidence lower bound, and we also need all the samples to be symmetric over different dimensions (this is required by the Cauchy-Schwitz inequality used in the proof), such that the confidence bound inequalities in Lemma C.3 also become equalities. Then, in this case, the suboptimality bound is matched and the bound in Theorem 4.3 is tight.

\subsection{Reward Bias of UDS}
\label{UDS_rewardbias}
In this section, we show that UDS suffers from a constant reward bias. 
\begin{lemma}
    By labeling all rewards in the unlabeled dataset to zeros, we have
    $$\EE_{\cD_0+\cD_1}{[|r(s,a)-\widehat{r}_{UDS}(s,a)|]} = \frac{N_1}{N_0+N_1}\cdot \EE_{\cD_1}{[|r(s,a)|]}. $$
    That is, the reward bias does not vanish as long as the ratio of labeled data size and unlabeled data size keeps constant.
\end{lemma}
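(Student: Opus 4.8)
The plan is to unfold the empirical expectation over the merged dataset $\cD_0 \cup \cD_1$ and use the defining property of UDS: it preserves the recorded reward on every labeled transition and overwrites the reward with $0$ on every unlabeled transition. Writing $\EE_{\cD_0+\cD_1}[\,\cdot\,] = \frac{1}{N_0+N_1}\sum_{\tau=1}^{N_0+N_1}[\,\cdot\,]$ as a uniform average over all $N_0+N_1$ transitions, I would partition the summation index set into the $N_0$ transitions coming from $\cD_0$ and the $N_1$ transitions coming from $\cD_1$.

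First I would note that for a transition $(s_\tau,a_\tau) \in \cD_0$ we have $\widehat{r}_{UDS}(s_\tau,a_\tau) = r(s_\tau,a_\tau)$, so the integrand $|r(s_\tau,a_\tau) - \widehat{r}_{UDS}(s_\tau,a_\tau)|$ vanishes; whereas for $(s_\tau,a_\tau) \in \cD_1$ we have $\widehat{r}_{UDS}(s_\tau,a_\tau) = 0$, so the integrand equals $|r(s_\tau,a_\tau)|$. Substituting these into the split sum gives
$$
\EE_{\cD_0+\cD_1}\bigl[|r - \widehat{r}_{UDS}|\bigr] = \frac{1}{N_0+N_1}\Bigl(\sum_{\tau\in\cD_0} 0 + \sum_{\tau\in\cD_1}|r(s_\tau,a_\tau)|\Bigr) = \frac{1}{N_0+N_1}\sum_{\tau\in\cD_1}|r(s_\tau,a_\tau)|.
$$
I would then re-express the right-hand side through the empirical expectation over $\cD_1$ alone by inserting a factor $N_1/N_1$:
$$
\frac{1}{N_0+N_1}\sum_{\tau\in\cD_1}|r(s_\tau,a_\tau)| = \frac{N_1}{N_0+N_1}\cdot\frac{1}{N_1}\sum_{\tau\in\cD_1}|r(s_\tau,a_\tau)| = \frac{N_1}{N_0+N_1}\,\EE_{\cD_1}\bigl[|r(s,a)|\bigr],
$$
which is exactly the claimed identity. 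The concluding remark follows because the coefficient $N_1/(N_0+N_1) = 1/(1 + N_0/N_1)$ stays bounded away from $0$ as long as the ratio $N_0/N_1$ stays bounded, so the bias cannot be driven to zero by adding more data while keeping the proportion of labeled to unlabeled samples fixed.

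There is essentially no obstacle here: the statement is a one-line computation once the conventions are fixed. The only thing requiring care is being fully explicit about what $\widehat{r}_{UDS}$ denotes on each part of the data (the originally recorded reward on $\cD_0$, the constant $0$ on $\cD_1$) and keeping the empirical expectations correctly normalized --- $\EE_{\cD_0+\cD_1}$ by $N_0+N_1$ and $\EE_{\cD_1}$ by $N_1$. Getting this bookkeeping right is all the proof needs.
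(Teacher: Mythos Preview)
Your proposal is correct and matches the paper's proof essentially line for line: both split the empirical expectation over $\cD_0\cup\cD_1$ into the $\cD_0$ and $\cD_1$ contributions, use $\widehat{r}_{UDS}=r$ on $\cD_0$ and $\widehat{r}_{UDS}=0$ on $\cD_1$, and renormalize to obtain the factor $N_1/(N_0+N_1)$. The paper compresses this into a single displayed equation, but the content is identical.
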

\begin{proof}
\$
\EE_{\cD_0+\cD_1}{[|r(s,a)-\widehat{r}_{UDS}(s,a)|]} &= \frac{N_0}{N_0+N_1} \cdot \EE_{\cD_0}{[|r(s,a)-r(s,a)|]} + \frac{N_1}{N_0+N_1} \cdot \EE_{\cD_1}{[|r(s,a)-0|]} \\
& = \frac{N_1 }{N_0+N_1} \cdot \EE_{\cD_1}{[|r(s,a)|]}. \\
\$
\end{proof}
\section{How bad is the reward prediction baseline actually?}
\label{sec:baseline}

We conduct ablation studies for the reward prediction method from three aspects, including model size, ensemble number, and early stopping, which are shown in Table~\ref{tab:model_size}, Table~\ref{tab:early_stopping} and Table~\ref{tab:ensemble}.
For the model size, 256*3 denotes the 256 hidden neurons with three hidden layers.
As for the early stopping, Epoch Number = 3 denotes traversing the entire training dataset 3 times.~(We find the Epoch Number=3 is enough to reduce the prediction error to a small range, and increasing epochs leads to overfitting.)

We conduct experiments in a setting where the quality of the reward labeled and reward-free data differs significantly. For example, walker2d-expert(50K)-random(0.1M) denotes 50K reward-labeled data from expert datasets and 0.1M reward-free data from random datasets.
We set the PDS as the default parameter in all comparisons, including the model size 256*2, the training epoch 3, and the ensemble number 10.
All experimental results adopt the normalized score metric averaged over five seeds with standard deviation.

The experimental results show that fine-tuning reward prediction can improve its performance in some cases,
Nevertheless, a well-tuned reward prediction method still performs poorly compared to PDS in general.
We hypothesize that this is because the "test"~(reward-free) dataset may have a large distributional shift from the "training"~(reward-labeled) dataset, violating the i.i.d. assumption in supervised learning.

\begin{table}[h]
    \centering
    \begin{tabular}{c|cccc||c}
        \toprule
        Model Size~(Reward Prediction) & 256*3 & 512*3 & 256*4 & 512*4 & PDS \\
        \midrule
        walker2d-expert(50K)-random(0.1M) & 1.8$\pm$0.3 & 11.2$\pm$2.0 & 1.2$\pm$0.2 & 13.3$\pm$2.7 & \textbf{77.6$\pm$8.1}\\
        walker2d-random(50K)-expert(0.1M) & 95.1$\pm$1.9 & 96.5$\pm$1.6 & 99.4$\pm$2.0 & 97.2$\pm$2.2 & \textbf{105.1$\pm$1.2} \\
        hopper-expert(50K)-random(0.1M) & 27.8$\pm$14.7 & 31.6$\pm$15.2 & 36.6$\pm$14.8 & 36.3$\pm$11.8 & \textbf{61.5$\pm$6.2}\\
        hopper-random(50K)-expert(0.1M) & 72.9$\pm$19.7 & 78.9$\pm$16.7 & \textbf{92.1$\pm$15.7} & \textbf{92.3$\pm$14.6} & \textbf{93.8$\pm$4.9}\\
        \bottomrule
    \end{tabular}
    \caption{Ablation for the model size. 
We adopt various model sizes for the prediction network in the Reward Prediction baseline, while PDS adopts the default parameter model size 256*3.}
    \label{tab:model_size}
\end{table}

\begin{table}[h]
    \centering
    \begin{tabular}{c|ccc||c}
        \toprule
        Epoch Number~(Reward Prediction) & 1 & 2 & 3 & PDS\\
        \midrule
        walker2d-expert(50K)-random(0.1M) & 2.9$\pm$1.3 & 39.6$\pm$4.1 & 1.8$\pm$0.7 & \textbf{77.6$\pm$8.1}\\
        walker2d-random(50K)-expert(0.1M) & 91.2$\pm$1.3 & 101.2$\pm$1.9 & 95.1$\pm$1.6 & \textbf{105.1$\pm$1.2} \\
        hopper-expert(50K)-random(0.1M) & 35.9$\pm$9.8 & 31.0$\pm$5.2 & 27.8$\pm$14.7 & \textbf{61.5$\pm$6.2} \\
        hopper-random(50K)-expert(0.1M) & 42.4$\pm$11.8 & 57.8$\pm$12.2 & 84.8$\pm$10.5 & \textbf{93.8$\pm$4.9} \\
        \bottomrule
    \end{tabular}
    \caption{Ablation for the early stopping.
We adopt various Training Epochs for the prediction network in the Reward Prediction baseline, while PDS adopts the default parameter epoch number 3.}
    \label{tab:early_stopping}
\end{table}

\begin{table}[h]
    \centering
    \begin{tabular}{c|cccc}
        \toprule
        Environment & Reward Prediction~(Ensemble=10) & PDS \\
        \midrule
        walker2d-expert(50K)-random(0.1M) & 18.9$\pm$3.6 & \textbf{77.6$\pm$8.1} \\ 
        walker2d-random(50K)-expert(0.1M) & 91.41$\pm$2.5 & \textbf{105.1$\pm$2.1} \\ 
        hopper-expert(50K)-random(0.1M) & 39.4$\pm$5.7 & \textbf{61.5$\pm$6.2} \\
        hopper-random(50K)-expert(0.1M) & 78.4$\pm$4.6 & \textbf{93.8$\pm$4.9} \\
        \bottomrule
    \end{tabular}
    \caption{Ablation for the ensemble.
PDS adopts the default parameter ensemble number 10 and we adopt the same ensemble number for the prediction network in the Reward Prediction baseline.}
    \label{tab:ensemble}
\end{table}


\end{document}